  \providecommand\BibTeX{{%
    \normalfont B\kern-0.5em{\scshape i\kern-0.25em b}\kern-0.8em\TeX}}}
\DeclareMathOperator*{\argminA}{arg\,max}
\begin{document}

\title{Leveraging Communication Topologies Between \\ Learning Agents in Deep Reinforcement Learning}  

\author{Dhaval Adjodah$^1$, Dan Calacci$^1$, Abhimanyu Dubey$^1$, \\ Anirudh Goyal$^2$, P. M. Krafft$^4$, Esteban Moro$^{1,3}$, Alex Pentland$^1$}
\affiliation{%
  \institution{$^1$ Massachusetts Institute of Technology \\ 
  $^2$ MILA/Universit\'e de Montr\'eal \\ 
  $^3$ Universidad Carlos III de Madrid \\
  $^4$ Oxford Internet Institute, University of Oxford
  }
}

\begin{abstract}  
A common technique to improve learning performance in deep reinforcement learning (DRL) and many other machine learning algorithms is to run multiple learning agents in parallel. A neglected component in the development of these algorithms has been how best to arrange the learning agents involved to improve distributed search. Here we draw upon results from the networked optimization literatures suggesting that arranging learning agents in communication networks other than fully connected topologies (the implicit way agents are commonly arranged in) can improve learning. We explore the relative performance of four popular families of graphs and observe that one such family (Erdos-Renyi random graphs) empirically outperforms the de facto fully-connected communication topology across several DRL benchmark tasks. Additionally, we observe that 1000 learning agents arranged in an Erdos-Renyi graph can perform as well as 3000 agents arranged in the standard fully-connected topology, showing the large learning improvement possible when carefully designing the topology over which agents communicate. We complement these empirical results with a theoretical investigation of why our alternate topologies perform better. Overall, our work suggests that distributed machine learning algorithms could be made more effective if the communication topology between learning agents was optimized\footnote{Correspondence to Dhaval Adjodah (dhaval@mit.edu). Code available at  \href{https://github.com/d-val/NetES}{github.com/d-val/NetES}}.
\end{abstract}

\keywords{Reinforcement Learning; Evolutionary algorithms; Deep learning; Networks}  

\maketitle


\section{Introduction}

    Implementations of deep reinforcement learning (DRL) algorithms have become increasingly distributed, running large numbers of parallel sampling and training nodes. For example, AlphaStar runs thousands of parallel instances of Stracraft II on TPU's \cite{alphastarblog}, and OpenAI Five runs on 128,000 CPU cores at the same time \cite{OpenAI_dota}. 

    Such distributed algorithms rely on an implicit communication network between the processing units being used in the algorithm.  These units pass information such as data, parameters, or rewards between each other, often through a central controller. For example, in the popular A3C \cite{Mnih2016} reinforcement learning algorithm, multiple `workers' are spawned with local copies of a global neural network, and they are used to collectively update the global network. These workers can either be viewed as implementing the parallelized form of an algorithm, or they can be seen as a type of multi-agent distributed optimization approach to searching the reward landscape for parameters that maximize performance.

    In this work, we take the latter approach of thinking of the `workers' as separate agents that search a reward landscape more or less efficiently. We adopt such an approach because it allows us to consider improvements studied in the field of multi-agent optimization \cite{ferber1999multi}, specifically the literatures of networked optimization (optimization over networks of agents with local rewards) \cite{nedic2017network,nedic201010,nedic2011asynchronous} and collective intelligence (the study of mechanisms of how agents learn, influence and collaborate with each other) \cite{wolpert1999introduction,woolley2010evidence}. 
    
    These two literatures suggest a number of different ways to improve such multi-agent optimization, and, in this work, we choose to focus on one of main ways to do so: optimizing the topology of communication between agents (i.e. the local and global characterization of the connections between agents used to communicate data, parameters, or rewards with).

    We focus on communication topology because it has been shown to result in increased exploration, higher reward, and higher diversity of solutions in both simulated high-dimensional optimization problems \cite{Lazer2007} and human experiments \cite{Barkoczi2016}, and because, to the best of our knowledge, almost no prior work has investigated how the topology of communication between agents affects learning performance in distributed DRL.  
    
    Here, we empirically investigate whether using alternate communication topologies between agents could lead to improving learning performance in the context of DRL. The two topologies that are almost always used in DRL are either a complete (fully-connected) network, in which all processors communicate with each other; or a star network---in which all processors communicate with a single hub server, which is, in effect, a more efficient, centralized implementation of the complete network (e.g., \cite{scott2016bayes}). Our hypothesis is that using other topologies than fully-connected will lead to learning improvements. 
    
    Given that network effects are sometimes only significant with large numbers of agents, we choose to build upon one of the DRL algorithms most oriented towards parallelizability and scalability: Evolution Strategies~\cite{rechenberg1973evolution,schwefel1977numerische,wierstra2014natural}, which has recently been shown to scale-up to tens of thousands of agents \cite{Salimans2017}. 
    
    We introduce Networked Evolution Strategies (NetES), a networked decentralized variant of ES. NetES, like many DRL algorithms and evolutionary methods, relies on aggregating the rewards from a population of processors that search in parameter space to optimize a single global parameter set. Using NetES, we explore how the communication topology of a population of processors affects learning performance.

    Key aspects of our approach, findings, and contributions are as follows:
\begin{itemize}
    	\item We introduce the notion of communication network topologies to the ES paradigm for DRL tasks.
    	\item We perform an ablation study using various baseline controls to make sure that any improvements we see come from using alternate topologies and not other factors.
    	\item We compare the learning performance of the main topological families of communication graphs, and observe that one family (Erdos-Renyi graphs) does best.
    	\item Using an optimized Erdos-Renyi graph, we evaluate NetES on five difficult DRL benchmarks and find large improvements compared to using a fully-connected communication topology. We observe that our 1000-agent Erdos-Renyi graph can compete with 3000 fully-connected agents.
    	\item We derive an upper bound which provides theoretical insights into why alternate topologies might outperform a fully-connected communication topology. We find that our upper bound only depends on the topology of learning agents, and not on the reward function of the reinforcement learning task at hand, which indicates that our results likely will generalize to other learning tasks.
\end{itemize}

        \begin{table}
            \centering
        \resizebox{1\linewidth}{!}{%
        \begin{tabular}{@{}lllll@{}}
        \toprule
        \textbf{Type} & \textbf{Task}  & \textbf{Fully-connected} & \textbf{Erdos} & \textbf{Improv. \%} \\ \midrule
        MuJoCo        & Ant-v1         & 4496                     & 4938           & \textbf{9.8}            \\
        MuJoCo        & HalfCheetah-v1 & 1571                     & 7014           & \textbf{346.3}          \\
        MuJoCo        & Hopper-v1      & 1506                     & 3811           & \textbf{153.1}          \\
        MuJoCo        & Humanoid-v1    & 762                      & 6847           & \textbf{798.6}          \\
        Roboschool    & Humanoid-v1     & 364                      & 429            & \textbf{17.9}           \\ \bottomrule
        \end{tabular}%
        }
        \caption{Improvements from Erdos-Renyi networks with 1000 nodes compared to fully-connected networks. }
        \label{tab:reward_asym}
        \end{table}

\begin{figure}[t]
                  \centering
                  \includegraphics[width=\linewidth]{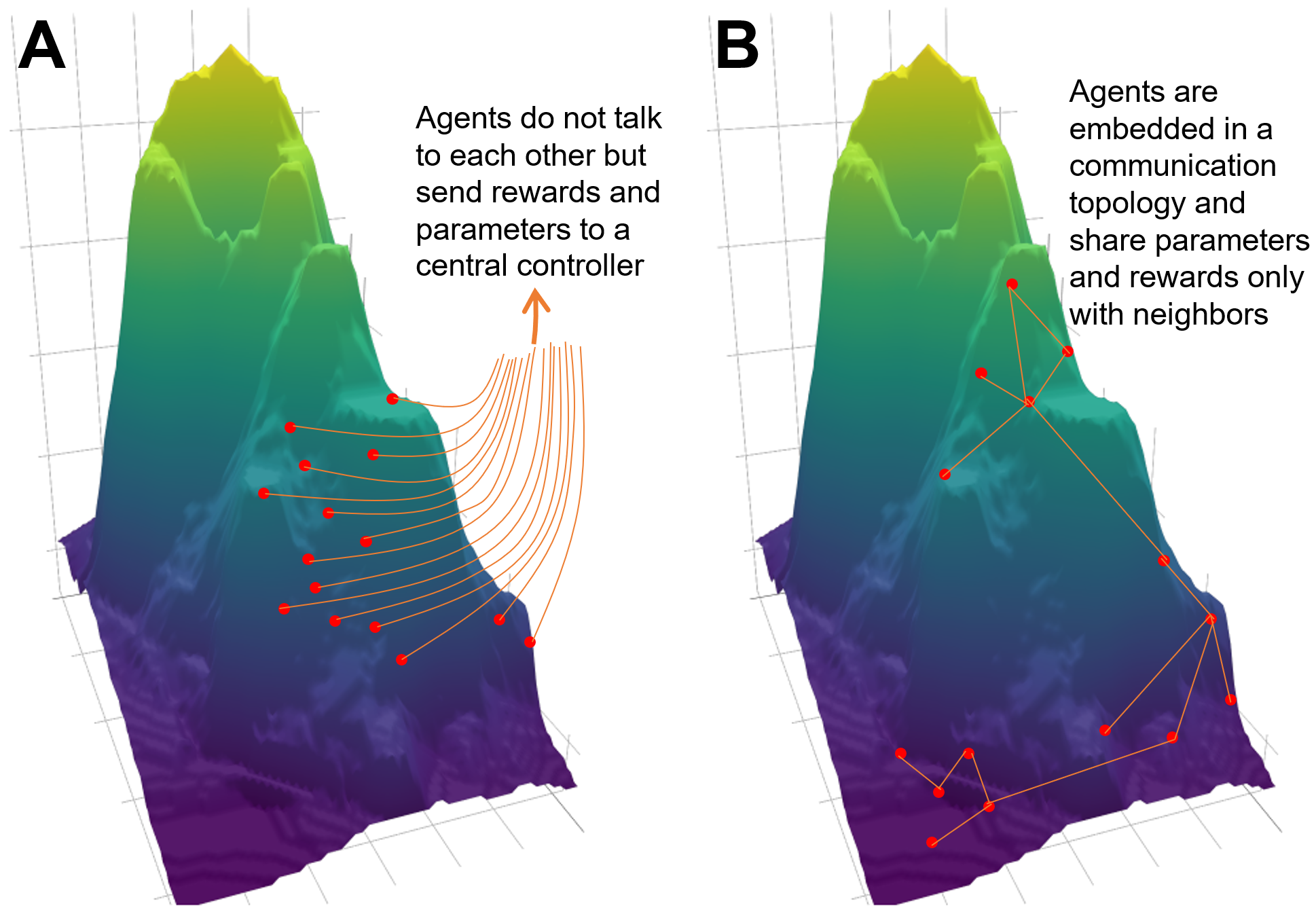}
                  \caption{Learning in DRL can be visualized with agents (red dots) searching a reward landscape for the parameter set (location) that leads to the highest reward.      
                  A: In most DRL algorithms, including ES, agents are searching the same local area. Because the controller receives information from all agents, and then broadcasts a new parameter to all other agents, agents are, in effect communicating in a fully connected network.
                  B: In NetES, the same number of agents are embedded in a communication topology over which they share data. This leads to a more distributed search where each cluster of agents focuses on a different part of the landscape. 
                  }
    \label{fig:illustration}
    \end{figure}

 \section{Preliminaries}
     \subsection{Evolution Strategies for Deep RL}
        As discussed earlier, given that network effects are generally only significant with large numbers of agents, we choose to build upon one of the DRL algorithms most oriented towards parallelizability and scalability: Evolution Strategies.

        We begin with a brief overview of the application of the Evolution Strategies (ES)~\cite{schwefel1977numerische} approach to DRL, following Salimans et al.~\cite{Salimans2017}. Evolution Strategies is a class of techniques to solve optimization problems by utilizing
         a derivative-free parameter update approach. The algorithm proceeds by selecting a fixed
         model, initialized with a set of weights $\matrixsym \theta$ (whose distribution $p_\phi$ is parameterized by parameters $\phi$), and an objective (reward) function $R(\cdot)$ defined externally by the DRL task being solved. The ES algorithm then
         maximizes the average objective value $\mathbb E_{\matrixsym \theta \sim p_\phi} R(\matrixsym \theta)$,
         which is optimized with stochastic gradient ascent. The score function estimator for
         $\nabla_\phi \mathbb E_{\matrixsym \theta \sim p_\phi} R(\matrixsym \theta)$ is similar to REINFORCE~\cite{williams1992simple}, given by $\nabla_\phi \mathbb E_{\matrixsym \theta \sim p_\phi} R(\matrixsym \theta) =
         \mathbb E_{\matrixsym \theta \sim p_\phi}[R(\matrixsym \theta) \nabla_\phi \log p_\phi (\matrixsym \theta)]$.

         The update equation used in this algorithm for the parameter $\matrixsym \theta$ at any iteration $t+1$, for an appropriately chosen learning rate $\alpha$ and noise standard deviation $\sigma$, is a discrete approximation to the gradient:
         \begin{equation}
         {\matrixsym \theta}^{(t+1)}= {\matrixsym \theta}^{(t)} + \frac{\alpha}{N \sigma^2} \sum_{i=1}^N \big(R(\matrixsym \theta^{(t)} +
         \sigma \matrixsym \epsilon_i^{(t)}) \cdot \sigma \matrixsym \epsilon_i^{(t)} \big) \label{eqn:basic_update_rule}
         \end{equation}
         This update rule is implemented by spawning a collection of $N$ agents at every iteration $t$, with perturbed versions of ${\matrixsym \theta}^{(t)}$, i.e. $\{ ({\matrixsym \theta}^{(t)} +
          \sigma {\matrixsym \epsilon}_1^{(t)}), ..., ({\matrixsym \theta}^{(t)} + \sigma {\matrixsym \epsilon}_N^{(t)}) \}$ where $\epsilon \sim \mathcal{N}(0,\,\emph{I})$.
         The algorithm then calculates ${\matrixsym \theta}^{(t+1)}$ which is broadcast again to all agents, and the process is repeated.

         In summary, a centralized controller holds a global parameter $\theta$, records the perturbed noise ${\matrixsym \epsilon}_i^{(t)}$ used by \textit{all}  agents, collects rewards from \textit{all}  agents at the end of an episode, calculates the gradient and obtains a new global parameter $\theta$.
         Because the controller receives information from all agents, and then broadcasts a new parameter to all other agents, each agent is in effect communicating (through the controller) with all other agents. 

         This means that the de facto communication topology used in Evolution Strategies (and all other DRL algorithms that use a central controller) is a fully-connected network. Our hypothesis is that using alternate communication topologies between agents will lead to improved learning performance. 
         
         So far, we have assumed that all agents start with the same global parameter $\matrixsym \theta^{(t_0)}$. When each agent $i$ starts with a different parameter ${\matrixsym \theta}_i^{(t_o)}$, Equation \ref{eqn:basic_update_rule} has to be generalized. In the case when all agents start with the same parameter, Equation \ref{eqn:basic_update_rule} can be understood as having each agent taking a weighted average of the differences (perturbations) between their last local parameter copy and the perturbed copies of each agent, (the differences being $\sigma\epsilon^{(t)}_i = ((\theta^{(t)} + \sigma\epsilon^{(t)}_i) - \theta^{(t)})$). The weight used in the weighted average is given by the reward at the location of each perturbed copy, $R(\matrixsym \theta^{(t)} + \sigma \matrixsym \epsilon_i^{(t)})$.
         
         When agents start with different parameters, the same weighted average is calculated: because each agent now has different parameters, this difference between agent $i$ and $j$'s parameters is $((\theta^{(t)}_i + \sigma\epsilon^{(t)}_i) - \theta^{(t)}_j)$. The weights are still $R(\matrixsym \theta^{(t)} + \sigma \matrixsym \epsilon_i^{(t)})$). In this notation, Equation \ref{eqn:basic_update_rule} is then:
     \begin{multline}
     {\matrixsym \theta}_j^{(t+1)}= {\matrixsym \theta}_j^{(t)} +  \frac{\alpha}{N \sigma^2} \sum_{i=1}^N \Big(R(\matrixsym \theta^{(t)}_i +
     \sigma \matrixsym \epsilon_i^{(t)}) \cdot (\matrixsym \theta^{(t)}_i + \sigma \matrixsym \epsilon_i^{(t)} -
     \matrixsym \theta^{(t)}_j)\Big) \hfill
      \label{eqn:update_rule_fully_difference_starting}
     \end{multline}
     
     It is straightforward to show that Equation \ref{eqn:update_rule_fully_difference_starting} reduces to Equation \ref{eqn:basic_update_rule} when all agents start with the same parameter. As we will show, generalizing this standard update rule further to handle alternate topologies will be straightforward. 
         
        \begin{figure*}
                 \begin{minipage}{.32\textwidth}
                   \centering
                   \includegraphics[width=\linewidth]{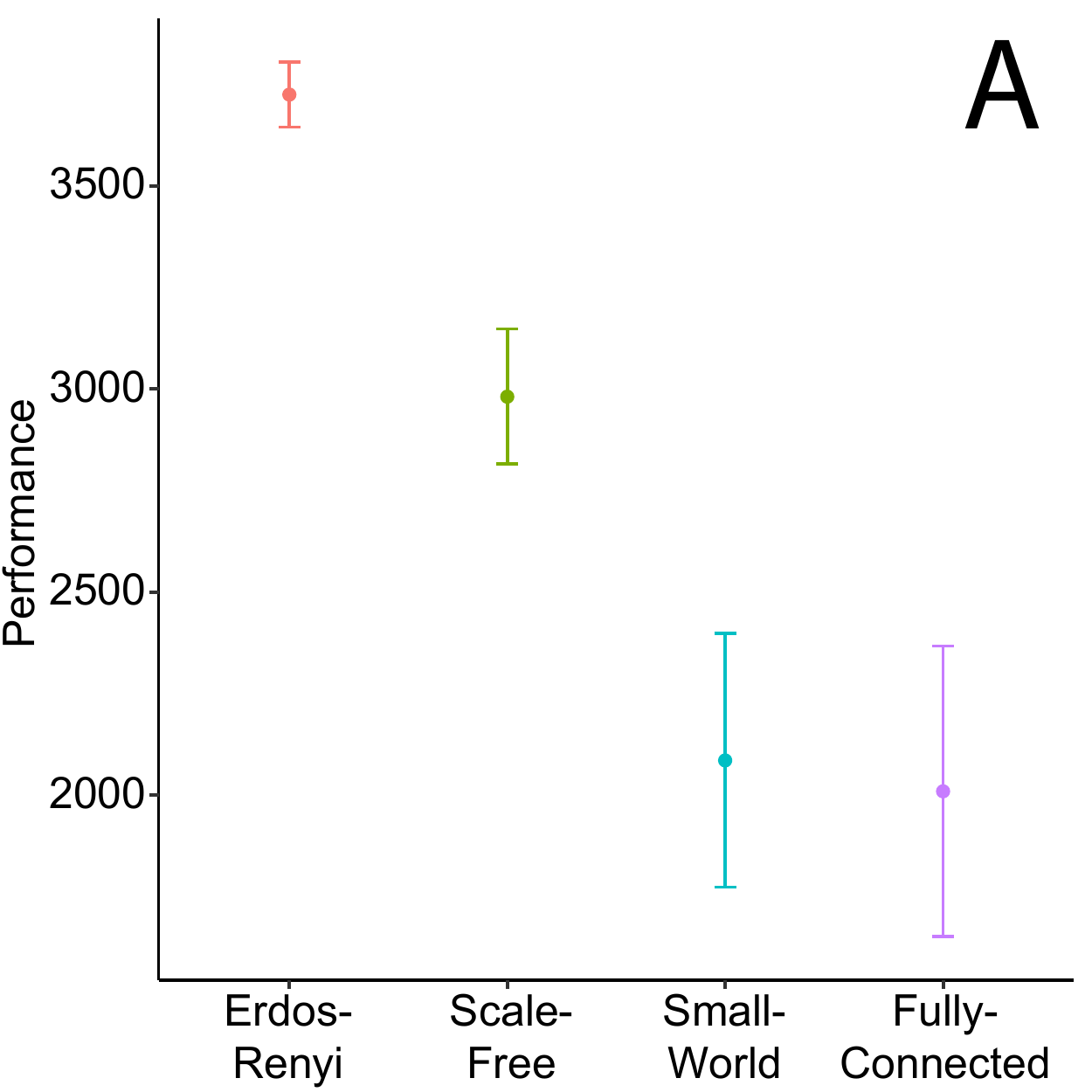}
                 \end{minipage}
                    \begin{minipage}{.32\textwidth}
                      \centering
                      \includegraphics[width=\linewidth]{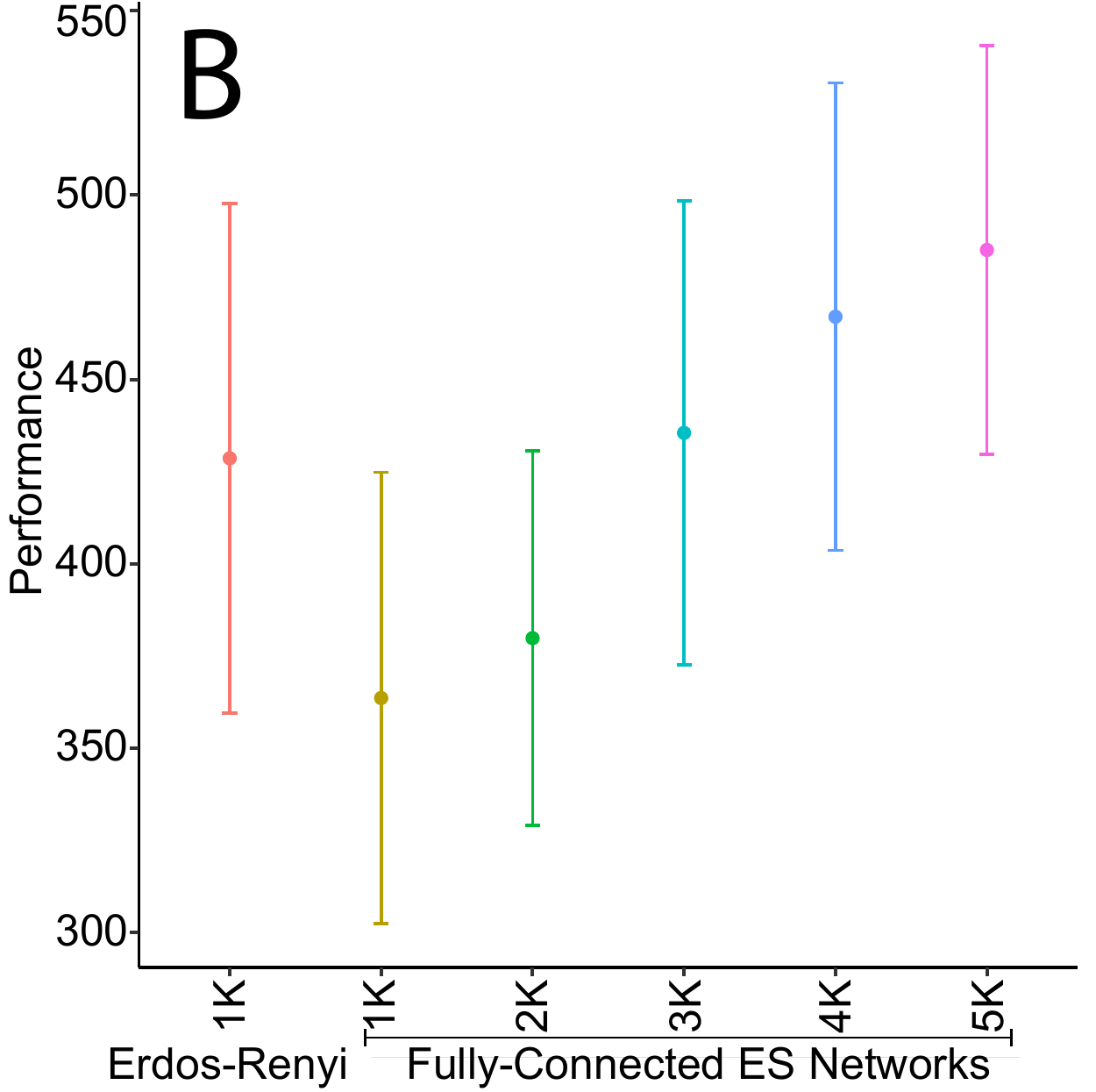}
                    \end{minipage}
                \begin{minipage}{0.32\textwidth}
                  \centering
                  \includegraphics[width=\linewidth]{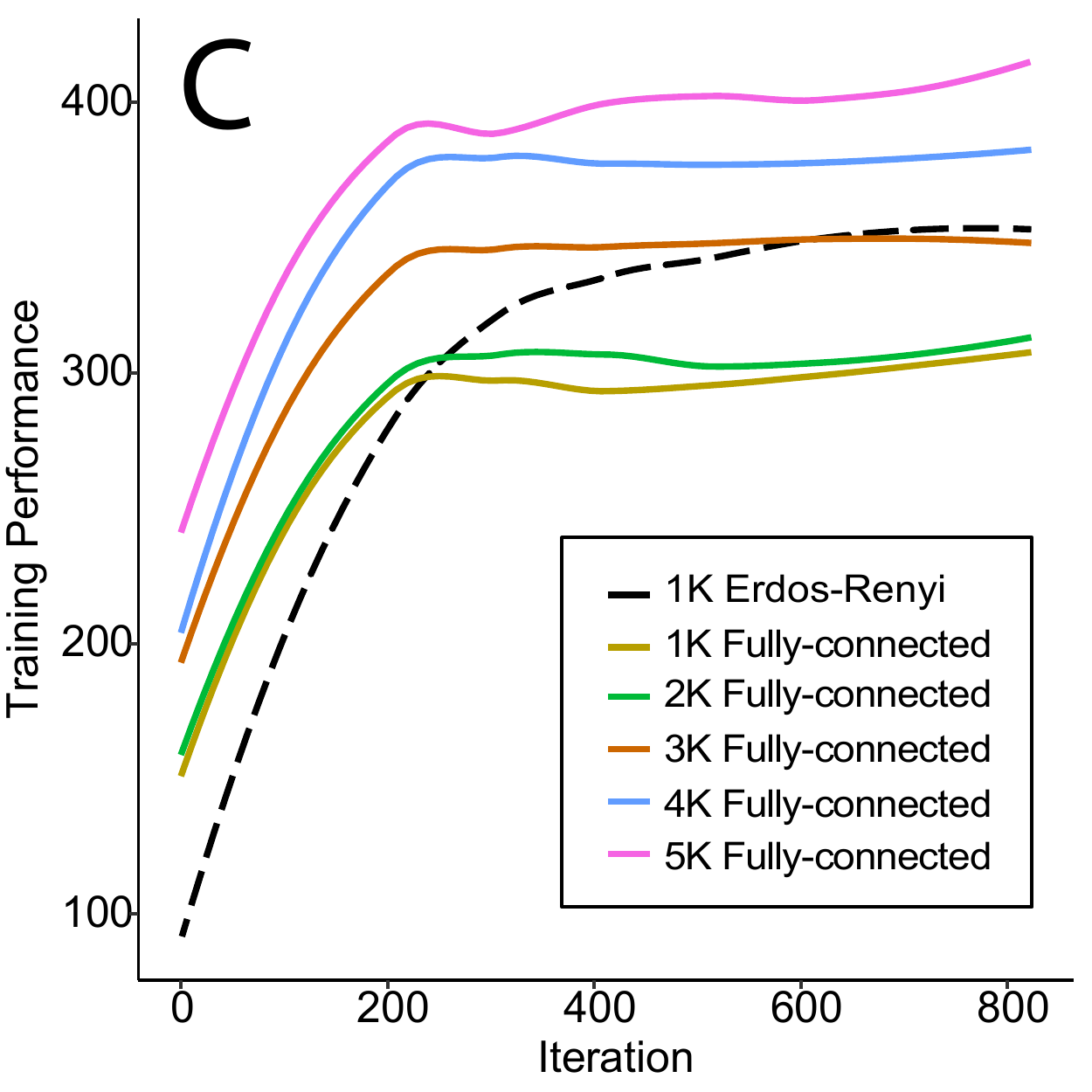}
                  \end{minipage}
                  \caption{
                  \textbf{A}: Learning performance on all network families: Erdos-Renyi graphs do best, fully-connected graphs do worst (MuJoCo Ant-v1 task with small networks of 100 nodes).
                  \textbf{B}: Evaluation results for Erdos-Renyi graph with 1000 agents compared to fully-connected networks with varying network sizes (RoboSchool Humanoid-v1).
                  \textbf{C}: Comparing Erdos-Renyi graph with 1000 agents to fully-connected networks with varying network sizes on training (not evaluation metric) performance (Roboschool Humanoid-v1).
                  \textbf{All}: Error bars represent 95\% confidence intervals.
                  }
    \label{fig:combined1}
    \end{figure*}
    
\section{Problem Statement}

The task ahead is to take the standard ES algorithm and operate it over new communication topologies, wherein each agent is only allowed to communicate with its neighbors. This would allow us to test our hypothesis that alternate topologies perform better than the de facto fully-connected topology. 

An interesting possibility for future work would be to optimize over the space of all possible topologies to find the ones that perform best for our task at hand. In this work, we take as a more tractable starting point a comparison of four popular graph families (including the fully-connected topology).

 \subsection{NetES : Networked Evolution Strategies}
We denote a network topology by ${\matrixsym A} = \{a_{ij}\}$, where $a_{ij} = 1$ if agents $i$ and $j$
     communicate with each other, and equals 0 otherwise. ${\matrixsym A}$
     represents the \textit{adjacency matrix} of connectivity, and fully characterizes the communication topology between agents.
 In a fully connected network, we have $a_{ij} = 1$ for all $i,j$.

 Using adjacency matrix ${\matrixsym A}$, it is straightforward to allow equation  \ref{eqn:update_rule_fully_difference_starting} to operate over any communication topologies:

   \begin{multline}
     {\matrixsym \theta}_j^{(t+1)}= {\matrixsym \theta}_j^{(t)} + \frac{\alpha}{N \sigma^2} \sum_{i=1}^N a_{ij}\cdot\Big(R(\matrixsym \theta^{(t)}_i +
     \sigma \matrixsym \epsilon_i^{(t)}) \cdot (\matrixsym \theta^{(t)}_i + \sigma \matrixsym \epsilon_i^{(t)} -
     \matrixsym \theta^{(t)}_j)\Big) \hfill
      \label{eqn:update_rule_4}
     \end{multline}

     Because equation \ref{eqn:update_rule_4} uses the same weighted average as in ES (equations \ref{eqn:basic_update_rule} and \ref{eqn:update_rule_fully_difference_starting}), when fully-connected networks are used (i.e. $a_{ij} = 1$) and when agents start with the same parameters, equation \ref{eqn:update_rule_4} reduces to \ref{eqn:basic_update_rule}.

    The only other change introduced by NetES is the use of periodic global broadcasts. We implemented parameter broadcast as follows: at every iteration, with a probability $p_b$, we
         choose to replace all agents' current parameters with the best agent's performing weights, and then continue training (as per Equation \ref{eqn:update_rule_4}) after that. The same broadcast techniques have been used in many other algorithms to balance local vs. global search (e.g. the `exploit' action in Population-based Training
        \cite{jaderberg2017population} replaces current agent weights with the weights that give the highest rewards).

    The full description of the NetES algorithm is shown in Algorithm \ref{alg:algo1}. 

     \begin{algorithm}
     	 \caption{Networked Evolution Strategies}
     	 \label{alg:algo1}
     \begin{algorithmic}
     	 \STATE \textbf{Input}: Learning rate $\alpha$, noise standard deviation $\sigma$,
     	 initial policy parameters ${\matrixsym \theta}_i^{(0)}$ where $i$ = 1, 2, $\ldots$, N (for N workers),
     	 adjacency matrix ${\matrixsym A}$, global broadcast probability $p_b$
     	 \STATE \textbf{Initialize}: $n$ workers with known random seeds, initial parameters ${\matrixsym \theta}_i^{(0)}$

     	 \FOR{$t$ = 0, 1, 2,$\ldots$ }
     			 \FOR {each worker $i$ = 1, 2, $\ldots$, N }
     					 \STATE Sample $\matrixsym \epsilon_j^{(t)}$ $\sim$ $\mathcal{N}(0,\,\emph{I})$

     					 \STATE Compute returns $R_i$ = $R(\matrixsym \theta^{(t)}_j +
                         \sigma \matrixsym \epsilon_j^{(t)})$
     			 \ENDFOR
     			 \STATE Sample $\beta^{(t)}$ $\sim$ $\mathcal{U}(0,\,1)$
     			 \IF { $\beta^{(t)}$ $<$ $p_b$ }
     					 \STATE Set ${\matrixsym \theta}_i^{(t+1)}$ $\gets$ $\argminA_{\matrixsym \theta_i^{(t)}} $ $R(\matrixsym \theta^{(t)}_j +
                         \sigma \matrixsym \epsilon_j^{(t)}) $
                         
     			 \ELSE
     					 \FOR {each worker $i$ = 1, 2, $\ldots$, n }
     							 \STATE Set ${\matrixsym \theta}_i^{(t+1)} \gets  {\matrixsym \theta}_i^{(t)} + \frac{\alpha}{N \sigma^2} \sum_{j=1}^N a_{ij}\cdot \newline
     							 \hfill \left(R(\matrixsym \theta^{(t)}_j +
                                 \sigma \matrixsym \epsilon_j^{(t)}) \cdot (\matrixsym \theta^{(t)}_j + \sigma \matrixsym \epsilon_j^{(t)} -
                                 \matrixsym \theta^{(t)}_i)\right)$
     					 \ENDFOR
     			 \ENDIF
     	 \ENDFOR

     \end{algorithmic}
     \end{algorithm}
     
     In summary, NetES implements three modifications to the ES paradigm: the use of alternate topologies through $a_{ij}$, the use of different starting parameters, and the use of global broadcast. In the following sections, we will run careful controls during an ablation study to investigate where the improvement in learning we observe come from. Our hypothesis is that they come mainly -- or completely -- from the use of alternate topologies. As we will show later, they do come from only the use of alternate topologies as shown in see Fig. \ref{fig:combined1}B.

    \subsection{Updating parameters over alternate topologies}
    Previous work \cite{Barkoczi2016} demonstrates that the exact form of the update rule does not matter as long as the optimization strategy is to find and aggregate the parameters with the highest reward (as opposed to, for example, finding the most common parameters many agents hold). Therefore, although our update rule is a straightforward extension of ES, we expect that our primary insight---that network topology can affect DRL---to still be useful with alternate update rules.

    Secondly, although Equation \ref{eqn:update_rule_4} is a biased gradient estimate, at least in the short term, it is unclear whether in practice we achieve a biased or an unbiased gradient estimate, marginalizing over time steps between broadcasts. This is because in the full algorithm (algorithm \ref{alg:algo1}) we implement, we combine this update rule with a periodic parameter broadcast (as is common in distributed learning algorithms - we will address this in detail in a later section), and every broadcast returns the agents to a consensus position.

    Future work can better characterize the theoretical properties of NetES and similar networked DRL algorithms using the recently developed tools of calculus on networks (e.g., \cite{acemoglu2011bayesian}). Empirically and theoretically, we present results suggesting that the use of alternate topologies can lead to large performance improvements.

    \subsection{Communication topologies under consideration}

    Given the update rule as per equation \ref{eqn:update_rule_4}, the goal is then to find which topology leads to the highest improvement.
    Because we are drawing inspiration from the study of collective intelligence and networked optimization, we use topologies that are prevalent in modeling how humans and animals learn collectively:  \begin{itemize}
      \item \textbf{Erdos-Renyi Networks:} Networks where each edge between any two nodes has a fixed independent probability of
      being present \cite{erdds1959random}, which are among the commonly used benchmark graphs for comparison in social networks \cite{mej2010networks}. 
      \item \textbf{Scale-Free Networks:} Scale-free networks, whose degree distribution follows a power law \cite{choromanski2013scale}, are commonly observed in citation and signaling biological networks \cite{barabasi1999emergence}. 
      \item \textbf{Small-World Networks:} Networks where most nodes can be reached through a small number of neighbors, resulting in the famous `six degrees of separation' \cite{travers1977experimental}. 
      \item \textbf{Fully-Connected Networks:} Networks where every node is connected to every other node.
      \end{itemize}

    We used the generative model of \cite{erdos1959random} to create Erdos-Renyi graphs, the Watts-Strogatz model \cite{watts1998collective} for Small-World graphs, and the Barabási-Albert model \cite{barabasi1999emergence} for Scale-Free networks.

    We can randomly sample instances of graphs from each family which is parameterized by the number of nodes $N$, and their degree distribution. Erdos-Renyi networks,
     for example, are parameterized by their average density $p$ ranging from 0 to 1, where 0 would
     lead to a completely disconnected graph (no nodes are connected), and 1 would
     lead back to a fully-connected graph. The lower $p$ is, the sparser a randomly generated network is. Similarly, the degree distribution of scale-free networks is defined by
     the exponent of the power distribution.
     Because each graph is generated randomly, two graphs with the same parameters will be different if they have different random seeds, even though, on average, they will have the same average degree (and therefore the same number of links).

    \subsection{Predicted improved performance of NetES}
    Through the modifications to ES we have described, we are now able to operate on any communication topology. Due to previous work in networked optimization and collective intelligence which shows that alternate network structures result in better performance, we expect NetES to perform better on DRL tasks when using alternate topologies compared to the de facto fully-connected topology. We also expect to see differences in performance between families of topologies.

\section{Related Work}
    A focus of recent DRL has been the ability to be able to run more and more agents in parallel (i.e. scalability). An early example is the Gorila framework \cite{Nair2015} that collects experiences in parallel from many agents. Another is A3C \cite{Mnih2016} that we discussed earlier. IMPALA \cite{espeholt2018impala} is a recent algorithm which solves many tasks with a single parameter set. Population Based Training \cite{jaderberg2017population} optimizes both learning weights and hyperparameters. However, in all the approaches described above, agents are organized in an implicit fully-connected centralized topology.
    
    We build on the Evolution Strategies implementation of Salimans et al. \cite{Salimans2017} which was modified for scalability in DRL. There have been many variants of Evolution Strategies over the years, such as CMA-ES~\cite{auger2005restart} which updates the covariance matrix of the Gaussian distribution, Natural Evolution strategies \cite{wierstra2014natural} where the inverse of the Fisher Information Matrix of search distributions is used in the gradient update rule, and,  Again, these algorithms implicitly use a fully-connected topology between learning agents.

    On the other hand, work in the networked optimization literature has demonstrated that the network structure of communication between nodes significantly affects the convergence rate and accuracy of multi-agent learning  \cite{nedic2017network,nedic201010,nedic2011asynchronous}. However, this work has been focused on solving global objective functions that are the sum (or average) of private, local node-based objective functions - which is not always an appropriate framework for DRL. In the collective intelligence literature, alternate network structures have been shown to result in increased exploration, higher overall maximum reward, and higher diversity of solutions in both simulated high-dimensional optimization \cite{Lazer2007} and human experiments \cite{Barkoczi2016}.

    One recent study \cite{macua2017diff} investigated the effect of communication network topology, but only as an aside, and on very small networks - they also observe improvements when using not fully-connected networks. Another work focuses on the absence of a central controller, but differs significantly in that agents solve different tasks at the same time \cite{zhang2018fully}. The performance of a ring network has also been compared to random one \cite{jing2004island}, however, there are significant differences in domains (ours is non-convex optimization, theirs is a quadratic convex assignment task), solving technique (we focus on black box optimization, theirs a GA with mutations), and the number of components (we make sure that all our networks are in a single connected component for fair comparison, while they vary the number of components). Additionally, since a ring of $N$ agents has $N-1$ edges and a random network has $O(pN^2)$ edges, their result that a sparser (ring) network performs better than a (denser) random network generally agrees with ours in terms of sparsity. Finally, the dynamic concept of adjusting connections and limiting the creation of hubs has been explored \cite{araujo2008memetic}. We have expanded on their study both by looking to the DRL context and by looking at several network topologies they left for future work. 

     To the best of our knowledge, no prior work has focused on investigating how the topology of communication between agents affects learning performance in distributed DRL, for large networks and on popular graph families.

\section{Experimental Procedure}

    \begin{figure*}[t]
         \begin{minipage}{0.32\textwidth}
                   \centering
                   \includegraphics[width=\linewidth]{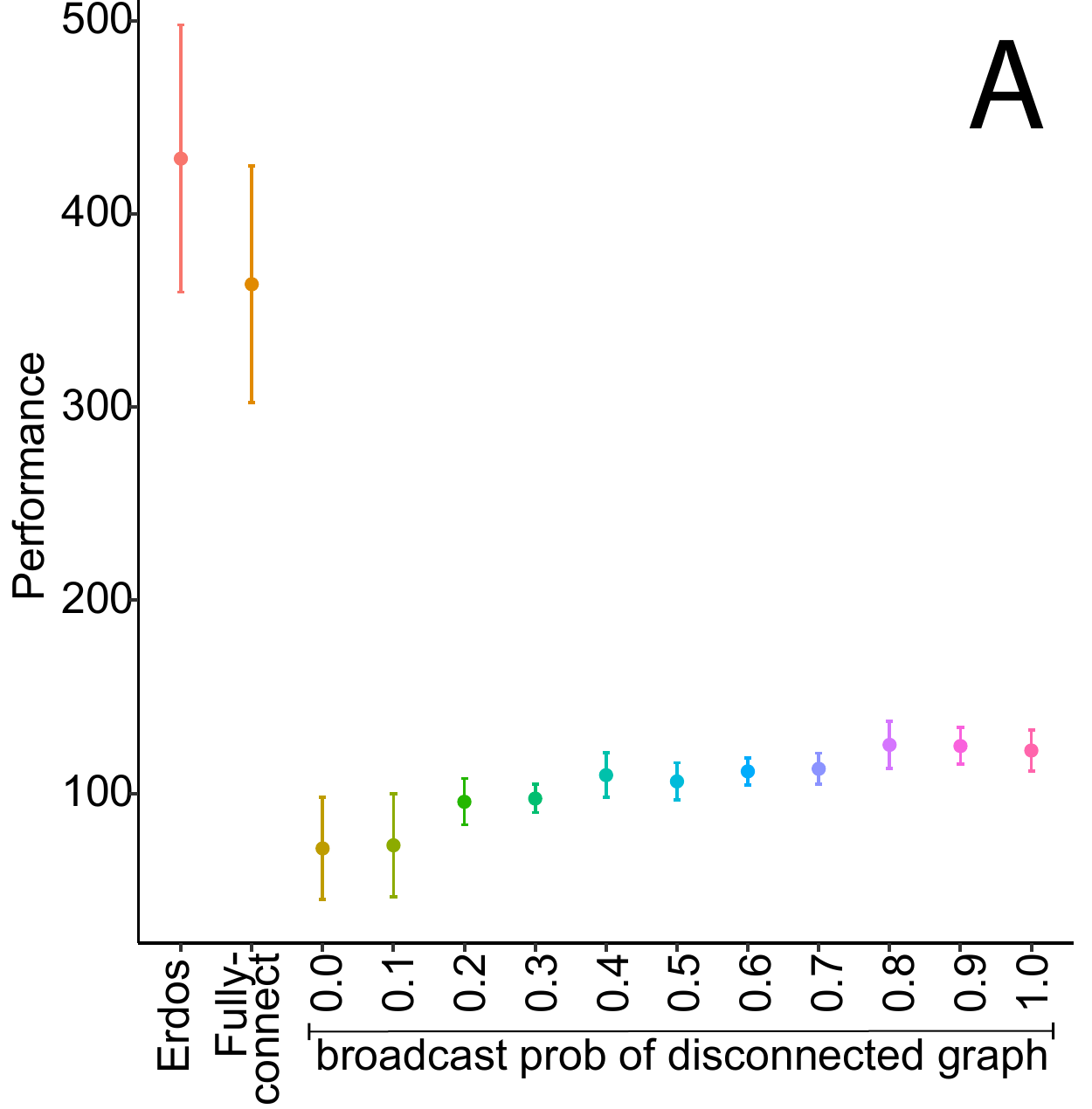}
                   \end{minipage}
                     \begin{minipage}{0.32\textwidth}
                       \centering
                       \includegraphics[width=\linewidth]{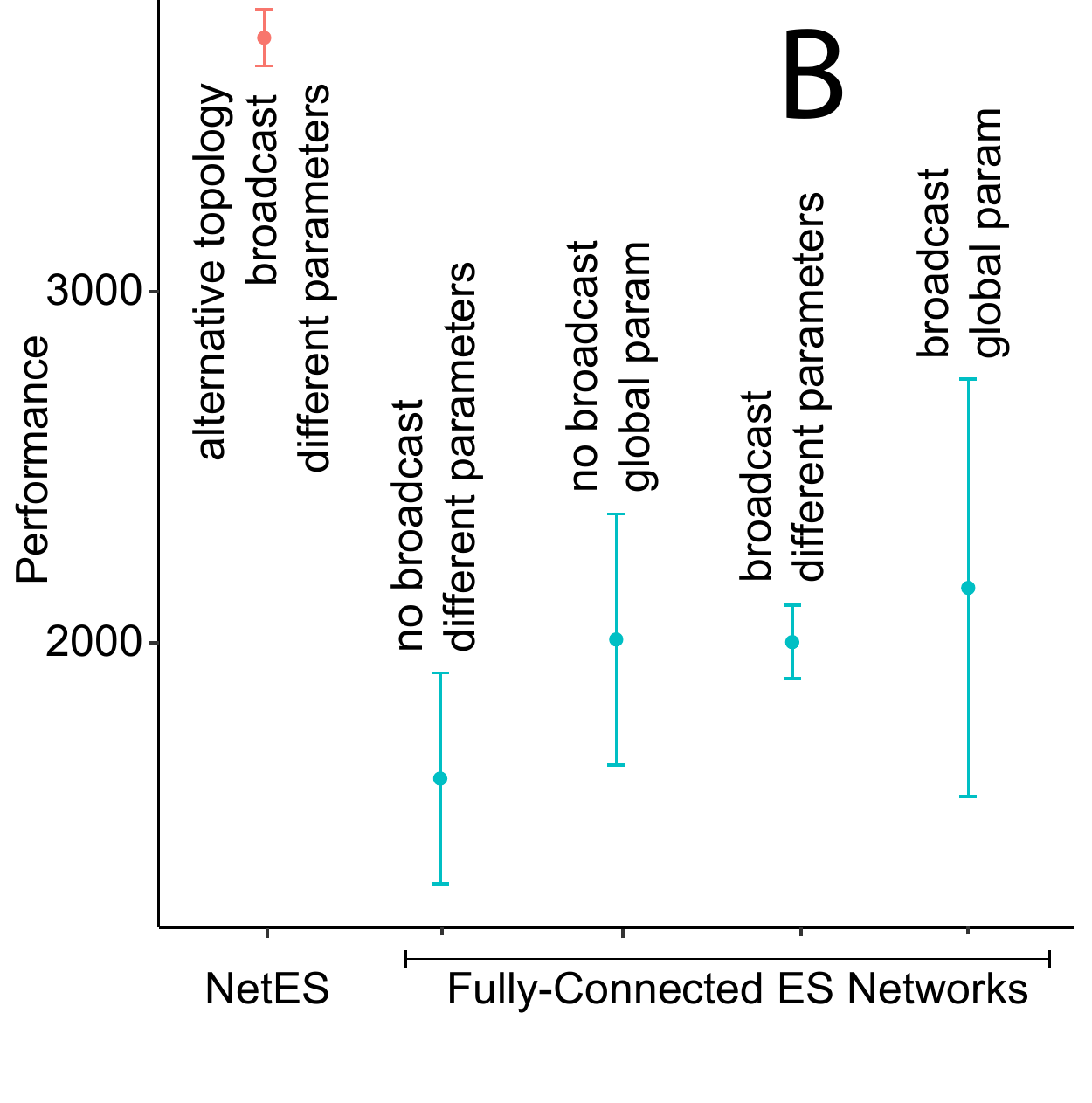}
                     \end{minipage}
                     \begin{minipage}{0.32\textwidth}
                       \centering
                       \includegraphics[width=\linewidth]{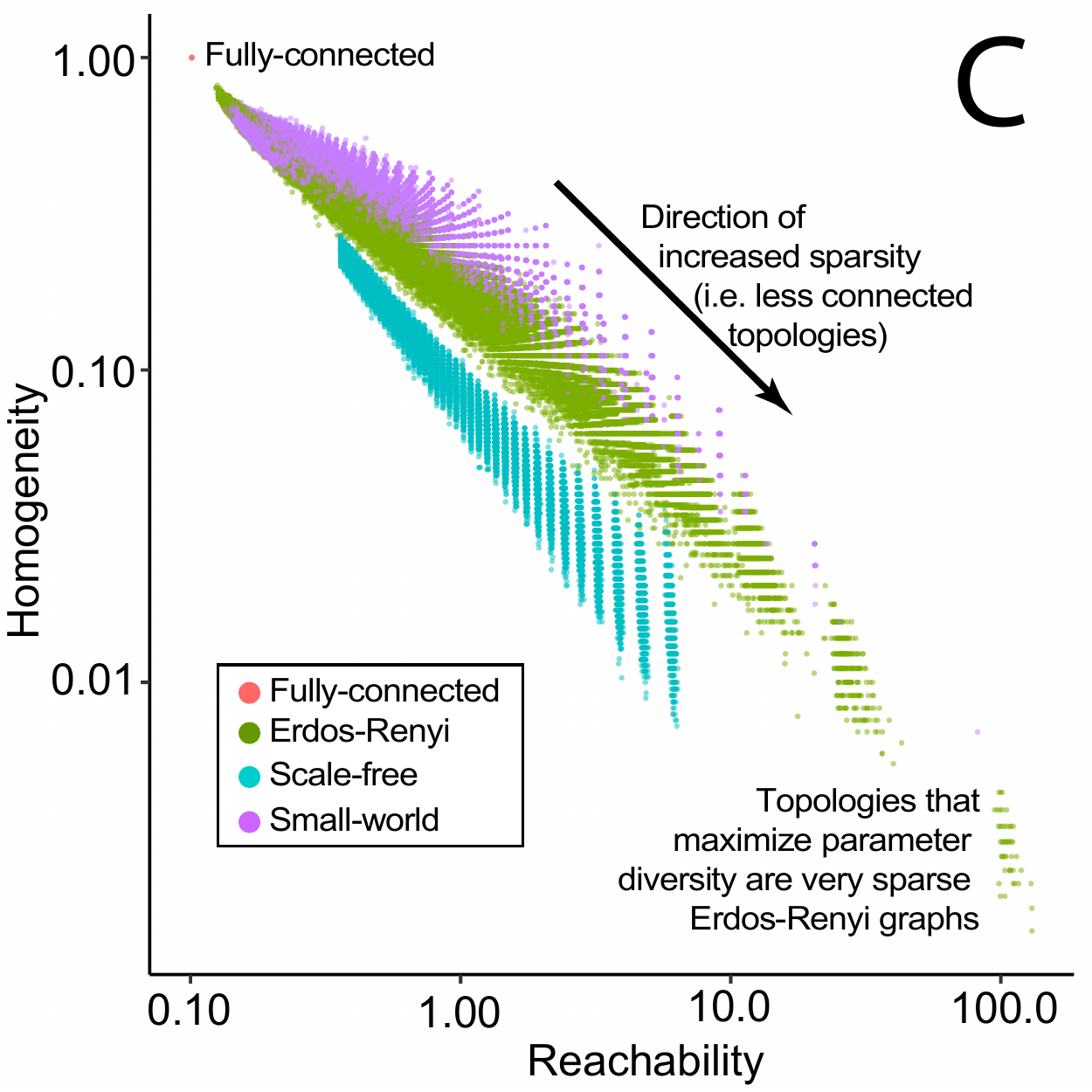}
                       \end{minipage}
        \caption{
        \textbf{A}: Agents with any amount of periodic broadcasting do not learn (RoboSchool Humanoid-v1 with 1000 agents).
        \textbf{B}: None of the control baselines with fully-connected networks learn, showing that the use of alternate topologies is what leads to learning (MuJoCo Ant-v1 with 100 agents).
        \textbf{C}: We generate instances of random networks from our four families of networks, and observe that sparser Erdos-Renyi graphs maximize the diversity of parameter updates.
        }
        \label{fig:combined2}

   \end{figure*}
   
\subsection{Goal of experiments}

The main goal of our experiments is to test our hypothesis that using alternate topologies will lead to an improvement in learning performance. Therefore, we want to be able to generate communication topologies from each of the four popular random graph families, wire our agents using this topology and deploy them to solve the DRL task at hand. We also want to run a careful ablation study to understand where the improvements come from. 

\subsection{Procedure}
    We evaluate our NetES algorithm on a series of popular benchmark tasks for deep reinforcement
    learning, selected from two frameworks---the open source Roboschool~\cite{roboschool}
    benchmark, and the MuJoCo framework~\cite{todorov2012mujoco}. The five benchmark tasks we evaluate on are:
    Humanoid-v1 (Roboschool and Mujoco), HalfCheetah-v1 (MuJoCo), Hopper-v1 (MuJoCo) and Ant-v1 (MuJoCo).
    Our choice of benchmark tasks is motivated by the difficulty of these walker-based problems.

    To maximize reproducibility of our empirical results, we use the standard evaluation metric of collecting the total
    reward agents obtain during a test-only episode, which we compute periodically during training \cite{mnih2013playing,bellemare2013arcade,Salimans2017}. Specifically, with a probability of 0.08, we intermittently pause training, take the parameters of the best agent and run this parameter (without added noise perturbation) for 1000 episodes, and take the
     average total reward over all episodes---as in Salimans et al. \cite{Salimans2017}.
     When performance eventually stabilizes to a maximum `flat' line (determined by calculating whether a 50-episode moving average has not changed by more than 5\%), we record the maximum of the evaluation performance values for this particular experimental run. As is usual \cite{bellemare2013arcade}, training performance (shown in Fig. \ref{fig:combined1}C) will be slightly lower that the corresponding maximum evaluation performance (shown in Table~\ref{tab:reward_asym}). We observe this standard procedure to be quite robust to noise.

     We repeat this evaluation procedure for multiple random instances of the same network topology by varying the random seed of network generation.
     These different instances share the same average density $p$ (i.e. the same average number of links) and the same number of nodes $N$. We use a global broadcast probability of 0.8 (a popular hyperparameter value for broadcast in optimization problems). Since each node runs the same number of episode time steps per iteration, different networks with the same $p$ can be fairly compared. For all experiments (all network families and sizes of networks), we use an average network density of 0.5 because it is sparse enough to provide good learning performance, and consistent (not noisy) empirical results.

     We then report the average performance over 6 runs with 95\% confidence intervals. We share the JSON files that fully describe our experiments and our code\footnote{JSON expement files and code implementation can be found at \href{https://github.com/d-val/NetES}{github.com/d-val/NetES}.}.

    In addition to using the evaluation procedure of Salimans et al. \cite{Salimans2017}, we also use their exact same neural network architecture:
    multilayer perceptrons with two 64-unit hidden layers separated by $tanh$ nonlinearities. We
     also keep all the modifications to the update rule introduced by Salimans et al.
      to improve performance: (1) training for one complete episode
      for each iteration; (2) employing antithetic or mirrored sampling, also known as mirrored
      sampling~\cite{geweke1988antithetic}, where we explore $\epsilon^{(t)}_i, -\epsilon^{(t)}_i$ for every sample
       $\epsilon^{(t)}_i \sim \mathcal N(0,I)$; (3) employing fitness shaping~\cite{wierstra2014natural}
        by applying a rank transformation to the returns before computing each parameter update,
        and (4) weight decay in the parameters for regularization. We also use the exact same hyperparameters as the original OpenAI (fully-connected and centralized) implementation \cite{Salimans2017}, varying only the network topology for our experiments.

\section{Results}
    \subsection{Empirical performance of network families}
    We first use one benchmark task (MuJoCo Ant-v1, because it runs fastest) and networks of 100 agents to evaluate NetES on each of the 4 families of communication topology: Erdos-Renyi, scale-free, small-world and the standard fully-connected network.
    As seen in Fig. \ref{fig:combined1}A, Erdos-Renyi strongly outperforms the other topologies. 
    
    Our hypothesis is that using alternate topologies (other than the de facto fully-connected topologies) can lead to strong improvements in learning performance. We therefore decide to focus on Erdos-Renyi graphs for all other results going forward - this choice is supported by our theoretical results which indicate that Erdos-Renyi would do better on any task. 

    If Erdos-Renyi continues to outperform fully-connected topologies on various tasks and with larger networks, our hypothesis will be confirmed - as long as they are also in agreement with our ablation studies. We leave to future work the full characterization of the performance of other network topologies. 

\subsection{Empirical performance on all benchmarks}

    Using Erdos-Renyi networks, we run larger networks of 1000 agents on all 5 benchmark results. As can be seen in Table~\ref{tab:reward_asym}, our Erdos-Renyi networks outperform fully-connected networks on all benchmark tasks, resulting in improvements ranging from 9.8\% on MuJoCo Ant-v1 to 798\% on MuJoCo Humanoid-v1. All results are statistically significant (based on 95\% confidence intervals).

    We note that the difference in performance between Erdos-Renyi and fully-connected networks is higher for smaller networks (Fig. \ref{fig:combined1}A and Fig. \ref{fig:combined2}B) compared to larger networks (Table~\ref{tab:reward_asym}) for the same benchmark, and we observe this behavior across different benchmarks. We believe that this is because NetES is able to achieve higher performance with fewer agents due to its efficiency of exploration, as supported in our empirical and theoretical results below.

\subsection{Varying network sizes}
    So far, we have compared alternate network topologies with fully-connected networks containing the same number of agents. In this section, we investigate whether organizing the communication topology using Erdos-Renyi networks can outperform larger fully-connected networks. We choose one of the benchmarks that had a small difference between the two algorithms at 1000 agents, Roboschool Humanoid-v1. As shown in Fig.~\ref{fig:combined1}B and the training curves (which display the training performance, not the evaluation metric results which would be higher as discussed earlier) in Fig.~\ref{fig:combined1}C, an Erdos-Renyi network with 1000 agents provides comparable performance to 3000 agents arranged in a fully-connected network. 

    \subsection{Ablation Study}

        To ensure that none of the modifications we implemented in the ES algorithm are causing improvements in performance, instead of just the use of alternate network topologies, we run control experiments on each modification: 1) the use of broadcast, 2) the fact that each agent/node has a different parameter set. We test all combinations.

    \subsubsection{Broadcast effect}
        We want to make sure that broadcast (over different probabilities ranging from 0.0 to 1.0) does not explain away our performance improvements. We compare `disconnected` networks, where agents can only learn from their own parameter update and from broadcasting (they do not see the rewards and parameters of any other agents each step as in NetES).  We compare them to Erdos-Renyi networks and fully-connected networks of 1000 agents on the Roboschool Humanoid-v1 task. As can be seen in Fig. \ref{fig:combined2}A practically no learning happens with \textbf{just} broadcast and no network. These experiments show that broadcast does not explain away the performance improvement we observe when using NetES.

    \subsubsection{Global versus individual parameters}
        The other change we introduce in NetES is to have each agent hold their own parameter value ${\matrixsym \theta}_i^{(t)}$ instead of a global (noised) parameter $\matrixsym \theta^{(t)}$. We therefore investigate the performance of the following 4 control baselines: fully-connected ES with 100 agent running:
        (1) same global parameter, no broadcast;
        (2) same global parameter, with broadcast;
        (3) different parameters, with broadcast;
        (4) different parameters, no broadcast; compared to NetES running an Erdos-Renyi network. For this experiment we use MuJoCo Ant-v1.  As shown in Fig. \ref{fig:combined2}B, NetES does better than all 4 other control baselines, showing that the improvements of NetES come from using alternate topologies and not from having different local parameters for each agent.

\section{Theoretical Insights}

    In this section, we present theoretical insights into why alternate topologies can outperform fully-connected topologies, and why
    Erdos-Renyi networks also outperform the other two network families we have tested. A motivating factor for introducing alternate connectivity is to search the parameter space more effectively, a common motivation in DRL and optimization in general. One possible heuristic for measuring the capacity to explore the parameter space is the diversity of parameter updates during each iteration, which can be measured by the variance of parameter updates:
    \begin{theorem}
    In a NetES update iteration $t$ for a system with $N$ agents
    with parameters $\Theta = \{\theta_1^{(t)}, ..., \theta_N^{(t)}\}$, agent communication matrix
    ${\matrixsym A} = \{a_{ij}\}$, agent-wise perturbations $\mathcal E=\{\epsilon_1^{(t)}, ..., \epsilon_N^{(t)}\}$, and parameter update  $u^{(t)}_i = \frac{\alpha}{N \sigma^2} \sum_{j=1}^N a_{ij}\cdot\big(R(\matrixsym \theta^{(t)}_j +
      \sigma \matrixsym \epsilon_j^{(t)}) \cdot ((\matrixsym \theta^{(t)}_j + \sigma \matrixsym \epsilon_j^{(t)}) - (\matrixsym \theta^{(t)}_i))\big)$ as per Equation~\ref{eqn:update_rule_4}, the following relation holds:
    \begin{multline}
    {\sf Var}_i[u^{(t)}_i] \leq \frac{\max^2 R(\cdot)}{N\sigma^4} \Big\{\Big(\frac{\lVert A^2 \rVert_F}{(\min_l |\matrixsym A_l|)^2} \Big) \cdot
    f(\Theta, \mathcal E) - \\
    \Big(\frac{\min_{l} |\matrixsym A_l|}{\max_l |\matrixsym A_l|} \Big)^2 \cdot \frac{\sigma^2}{N}(\sum_{i, j}\matrixsym \epsilon_i^{(t)}\matrixsym\epsilon_j^{(t)})\Big\}
    \end{multline}
    Here, $ |\matrixsym A_l| = \sum_j \ a_{jl}$, and $f(\Theta, \mathcal E) =$

    $\sqrt{(\sum_{j,k,m} \big((\matrixsym \theta^{(t)}_j +
    \sigma \matrixsym \epsilon_j^{(t)} - \matrixsym \theta^{(t)}_m) \cdot  (\matrixsym \theta^{(t)}_k + \sigma \matrixsym \epsilon_k^{(t)} -
    \matrixsym \theta^{(t)}_m)\big)^2)}$.
    \label{theorem:variance_bounds}
    \end{theorem}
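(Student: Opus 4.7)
The plan is to decompose the variance via the identity
\[
{\sf Var}_i[u_i^{(t)}] \;=\; \frac{1}{N}\sum_{i=1}^{N} (u_i^{(t)})^2 \;-\; \Bigl(\frac{1}{N}\sum_{i=1}^{N} u_i^{(t)}\Bigr)^2,
\]
upper bound the second-moment term, lower bound the mean-squared term, and combine the two. Both pieces will be arranged so that they carry the common prefactor $\max^2 R(\cdot)/(N\sigma^4)$ appearing in the statement.

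First, I would expand $(u_i^{(t)})^2$ as a double sum over pairs $(j,k)$ of neighbors of node $i$ and use $|R(v_j)\,R(v_k)| \le \max^2 R(\cdot)$, with $v_j := \theta_j^{(t)} + \sigma \epsilon_j^{(t)}$, to pull the reward product out front. The remaining triple sum $\sum_{i,j,k} a_{ij} a_{ik}\, |(v_j - \theta_i)(v_k - \theta_i)|$ is then attacked by two Cauchy--Schwarz applications. Swapping sums and applying Cauchy--Schwarz on the inner $i$-sum---using $(a_{ij} a_{ik})^2 = a_{ij} a_{ik}$ since $a_{ij} \in \{0,1\}$---produces the coefficient $\sqrt{(A^\top A)_{jk}} = \sqrt{(A^2)_{jk}}$ for symmetric $A$; a weighted Cauchy--Schwarz over $(j,k)$ then assembles the squared pair-entries into $\|A^2\|_F$, while the triple sum $\sqrt{\sum_{j,k,m} ((v_j - \theta_m)(v_k - \theta_m))^2} = f(\Theta, \mathcal{E})$ appears as the paired factor after renaming $i \mapsto m$. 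The normalization $1/(\min_\ell |A_\ell|)^2$ accounts for the worst-case column-degree entering the weighted pairing.

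Next, for the mean-squared term, I would expand $\frac{1}{N}\sum_i u_i^{(t)}$ and swap the order of summation so that the column-degrees $|A_\ell| = \sum_j a_{j\ell}$ emerge as coefficients of the perturbations. After squaring and substituting $v_j = \theta_j^{(t)} + \sigma \epsilon_j^{(t)}$, the cross terms collect into $\sigma^2 \sum_{i,j} \epsilon_i^{(t)} \epsilon_j^{(t)}$, while replacing the degree factors by their extremes---$\min_\ell |A_\ell|$ in the numerator and $\max_\ell |A_\ell|$ in the denominator---produces the ratio $(\min_\ell |A_\ell| / \max_\ell |A_\ell|)^2$. Combining with the $\sigma^2/N$ factor yields the subtracted piece of the stated bound.

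The main obstacle is making the Cauchy--Schwarz manipulation for the second-moment term produce $\|A^2\|_F$ (with the squared pair-entries $(A^2)_{jk}^2$ under the root) rather than the coarser $\sqrt{\sum_\ell |A_\ell|^2}$ or $\mathbf{1}^\top A^2 \mathbf{1}$ that a naive double application would yield; this forces a weighted form $\sum_{j,k} X_{jk} Y_{jk} \le \sqrt{\sum_{j,k} X_{jk}^2 c_{jk}} \cdot \sqrt{\sum_{j,k} Y_{jk}^2 / c_{jk}}$ with weights $c_{jk}$ chosen so the target Frobenius norm assembles exactly. A secondary subtlety is keeping the inequality directions compatible so both contributions can share the prefactor $\max^2 R(\cdot)/(N\sigma^4)$: the lower bound on the mean-squared term must be written in a normalization commensurate with that prefactor, which is why the degree ratio and $\sigma^2/N$ factor appear in the subtracted piece rather than a separate reward factor.
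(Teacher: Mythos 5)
Your plan follows essentially the same route as the paper's proof: decompose the variance as second moment minus squared mean, bound the rewards by $\max R(\cdot)$ and the degree normalization by $\min_\ell|A_\ell|$ in the second moment, apply Cauchy--Schwarz to extract an $A$-dependent factor times $f(\Theta,\mathcal E)$, and lower-bound the squared mean so that the cross terms collapse to $\sigma^2\sum_{i,j}\epsilon_i^{(t)}\epsilon_j^{(t)}$ weighted by the degree ratio. Two points of divergence are worth flagging. First, you are over-engineering the Cauchy--Schwarz step: the paper applies a single unweighted Cauchy--Schwarz over the whole triple index $(i,j,k)$, uses $(a_{ij}a_{ik})^2=a_{ij}a_{ik}$ and $\sum_i a_{ij}a_{ik}=(A^2)_{jk}$, and obtains $\big(\sum_{j,k}(A^2)_{jk}\big)^{1/2}$ --- which is what it \emph{denotes} $\lVert A^2\rVert_F$ (see Appendix 2, where reachability is computed as $\sqrt{\sum_{i,j}n^{(2)}_{ij}}$, the square root of the total number of length-2 paths, not of the sum of squared entries). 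Your weighted variant aimed at the literal Frobenius norm $\sqrt{\sum_{j,k}(A^2)_{jk}^2}$ is workable, since $(A^2)_{jk}\geq 1$ on its support, but it yields a strictly looser constant than the ``naive'' application you dismiss; the naive one is what the paper actually uses. Second, your treatment of the mean term omits the rewards entirely: to lower-bound $(\mathbb E_i[u_i^{(t)}])^2$ while still sharing the $\max^2R(\cdot)$ prefactor of the second-moment term, the paper bounds $R(\cdot)\geq\min R(\cdot)$ and then invokes the fitness-shaping normalization $\min R(\cdot)=-\max R(\cdot)$ to identify $\min^2 R$ with $\max^2 R$; without stating that assumption the subtracted term cannot legitimately carry the common prefactor, so this step needs to be made explicit. (Both your sketch and the paper's own argument share the further delicacy that replacing $|A_j|$ by its extremal values inside $\big(\sum_j|A_j|\epsilon_j^{(t)}\big)^2$ is not a monotone operation when the $\epsilon_j^{(t)}$ have mixed signs; that is a defect inherited from the original proof rather than one introduced by your plan.)
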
 The proof for Theorem~\ref{theorem:variance_bounds} is provided in the supplementary material.

    In this work, our hypothesis is to test if some networks \textit{could} do better than the de facto fully-connected topologies used in state-of-the-art algorithms. We leave to future work the important question of optimizing the network topology for maximum performance. Doing so would require a lower bound, as it would provide us the \textit{worst-case} performance of a topology. 
    
    Instead, in this section, we are interested in providing insights into why some networks \textit{could} do better than others, which can be understood through our upper-bound, as it allows us to understand the \textit{capacity} for parameter exploration possible by a network topology.

     By Theorem~\ref{theorem:variance_bounds}, we see that the diversity of exploration in the parameter updates across
      agents is likely affected by two quantities that involve the connectivity matrix $A$:
      the first being the
      term $(\lVert A^2 \rVert_F/(\min_l |\matrixsym A_l|))^2$ (henceforth referred to as the \textit{reachability} of the network), which according to our bound we want to maximize, and the
      second being $(\min_{l} |\matrixsym A_l|/\max_l |\matrixsym A_l|)^2$ (henceforth referred to as the \textit{homogeneity} of the network), which according to our bound we
      want to be as small as possible in order to maximize the diversity of parameter updates
      across agents. Reachability and homogeneity are not independent, and are statistics of the degree distribution of a graph. It is interesting to note that the upper bound \textit{does not depend on the reward landscape $R(\cdot)$ of the task at hand}, indicating that our theoretical insights should be independent of the learning task.

    Reachability is the squared ratio of the total number of paths of length 2 in $A$ to the minimum number of links of all nodes of $A$. 
    Homogeneity is the squared ratio of the minimum to maximum connectivity of all nodes of $A$: the higher this value, the more homogeneously connected the graph is. 

    Using the above definitions for reachability and homogeneity, we generate random instances of each network family, and plot them in Fig. \ref{fig:combined2}C. Two main observations can be made from this simulation. 
    \begin{itemize}
        \item Erdos-Renyi networks maximize reachability and minimize homogeneity, which means that they likely maximize the diversity of parameter exploration. 
        \item Fully-connected networks are the single worst network in terms of exploration diversity (they minimize reachability and maximize homogeneity, the opposite of what would be required for maximizing parameter exploration according to the suggestion of our bound).
    \end{itemize}
    
    These theoretical results agree with our empirical results: Erdos-Renyi networks perform best, followed by scale-free networks, while fully-connected networks do worse.

    It is also important to note that the quantity in Theorem~\ref{theorem:variance_bounds} is not the \textbf{variance of the value function gradient}, which is typically minimized in reinforcement learning. It is instead the \textbf{variance in the positions in parameter space} of the agents after a step of our algorithm.  This quantity is more productively conceptualized as akin to a radius of exploration for a distributed search procedure rather than in its relationship to the variance of the gradient. The challenge is then to maximize the search radius of positions in parameter space to find high-performing parameters. As far as the side effects this might have, given the common wisdom that increasing the variance of the value gradient in single-agent reinforcement learning can slow convergence, it is worth noting that noise (i.e. variance) is often critical for escaping local minima in other algorithms, e.g. via stochasticity in SGD.
 
\subsection{Sparsity in Erdos-Renyi Networks}
    
    We can approximate reachability and homogeneity for Erdos-Renyi networks as a function of their density (a derivation can be found in the supplementary):
    
    \begin{lemma}
    For an Erdos-Renyi graph $\mathcal G$ with $N$ vertices, density $p$ and adjacency matrix $A$, the following approximation can be made on its Reachability $\rho(\mathcal G)$.
    \begin{equation*}
        \rho(\mathcal G) \approx (pN)^{-1/2}.
    \end{equation*}
    Similarly, its homogeneity $\gamma(\mathcal G)$ can be approximated as follows.
    \begin{equation*}
        \gamma(\mathcal G) \approx 1 - 8\sqrt{(1-p)/(Np)}.
    \end{equation*}
    \end{lemma}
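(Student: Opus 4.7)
The plan is to use standard probabilistic properties of the Erdos-Renyi model $G(N,p)$ together with elementary concentration bounds for sums of independent Bernoulli random variables. The degrees $d_i = |A_i|$ are binomially distributed with $N-1$ trials and success probability $p$, so each has mean $\mu = (N-1)p \approx Np$ and variance $\sigma^2 = (N-1)p(1-p) \approx Np(1-p)$; these two facts drive both approximations, with the standard working regime $Np \gg 1$ ensuring that fluctuations are small relative to the mean.

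For the homogeneity $\gamma(\mathcal G) = (\min_l |A_l|/\max_l |A_l|)^2$, I would first argue that with high probability the extreme degrees satisfy $\min_l |A_l| \approx \mu - c\sigma$ and $\max_l |A_l| \approx \mu + c\sigma$ for an appropriate constant $c$ reflecting the concentration window across $N$ nodes (e.g., $c = 2$, corresponding to a roughly $95\%$ interval). A first-order Taylor expansion gives $(\mu - c\sigma)/(\mu + c\sigma) \approx 1 - 2c\sigma/\mu$, and squaring yields $\gamma(\mathcal G) \approx 1 - 4c\sigma/\mu$. Substituting $\sigma/\mu = \sqrt{(1-p)/(Np)}$ with $c = 2$ then produces exactly the stated expression $1 - 8\sqrt{(1-p)/(Np)}$.

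For the reachability $\rho(\mathcal G)$, I would estimate the two ingredients of the definition separately. The denominator $\min_l |A_l|$ is handled by the same degree concentration, giving $\min_l |A_l| \approx Np$. For $\|A^2\|_F$, I would observe that $(A^2)_{ij} = \sum_k A_{ik}A_{kj}$ is, for $i \neq j$, a sum of $N-2$ nearly independent Bernoulli products of mean $p^2$, so it concentrates around $Np^2$; the diagonal entries $(A^2)_{ii} = d_i$ concentrate around $Np$. Substituting these estimates into $\|A^2\|_F^2 = \sum_{i,j}(A^2_{ij})^2$, isolating the dominant off-diagonal scaling, and combining with $\min_l |A_l| \approx Np$ should produce the stated $(pN)^{-1/2}$ form after cancellation.

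The main obstacle will be the reachability estimate, since it requires simultaneous control of the concentration of individual entries of $A^2$, the magnitude of the full Frobenius sum $\|A^2\|_F^2$ across both diagonal and off-diagonal contributions, and the extremes of the degree sequence; moreover, the claimed scaling only emerges cleanly in the regime $Np \gg 1$, so careful tracking of which terms dominate and matching of the normalization conventions in the definition of $\rho$ is essential. The homogeneity bound, by contrast, should be essentially mechanical once the min/max degree concentration is established.
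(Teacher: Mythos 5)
Your homogeneity derivation is essentially the paper's: take $k_{\min}\approx \mu-2\sigma$ and $k_{\max}\approx \mu+2\sigma$ with $\mu=(N-1)p$ and $\sigma=\sqrt{(N-1)p(1-p)}$, expand the squared ratio to first order, and land on $1-8\sqrt{(1-p)/(Np)}$. Nothing to add there.

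The reachability half has a genuine gap, and it is not the concentration bookkeeping you flag as the main obstacle --- it is the interpretation of $\lVert A^2\rVert_F$. If you take the standard entry-wise-squared Frobenius norm, $\lVert A^2\rVert_F^2=\sum_{i,j}\big((A^2)_{ij}\big)^2$, then your own estimates $(A^2)_{ij}\approx Np^2$ (off-diagonal) and $(A^2)_{ii}\approx Np$ give $\lVert A^2\rVert_F^2\approx N(N-1)(Np^2)^2+N(Np)^2$. In the regime the paper actually operates in ($p=0.5$, $N\geq 100$, so $Np^2\gg 1$) the off-diagonal term dominates, $\lVert A^2\rVert_F\approx (Np)^2$, and this cancels $(\min_l|A_l|)^2\approx (Np)^2$ exactly, yielding $\rho(\mathcal G)\approx 1$ rather than the claimed $(pN)^{-1/2}$. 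So the cancellation you hope for after ``isolating the dominant off-diagonal scaling'' does not occur. The quantity the paper actually means by $\lVert A^2\rVert_F$ is $\big(\sum_{i,j}(A^2)_{ij}\big)^{1/2}$, the square root of the \emph{sum of the entries} of $A^2$, i.e.\ of the total number of length-2 walks $\sum_k d_k^2$; this can be traced to the Cauchy--Schwarz step in the proof of Theorem~1, where $(a_{ij}a_{ik})^2=a_{ij}a_{ik}$ for binary entries collapses a sum of squares into a plain sum. With that reading, $\sum_{i,j}(A^2)_{ij}\approx N\cdot Np+N(N-1)\cdot Np^2\approx p^2N^3$, hence $\lVert A^2\rVert_F\approx pN^{3/2}$ and $\rho(\mathcal G)\approx pN^{3/2}/(pN)^2=1/(p\sqrt N)$, which is what the appendix derives (and which, incidentally, differs by a factor of $\sqrt{p}$ from the $(pN)^{-1/2}$ printed in the lemma --- an inconsistency internal to the paper, not one your argument can repair). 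As proposed, your reachability argument would not reach either expression.
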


    \begin{figure}[t]
              \centering
         \includegraphics[width=.7\linewidth]{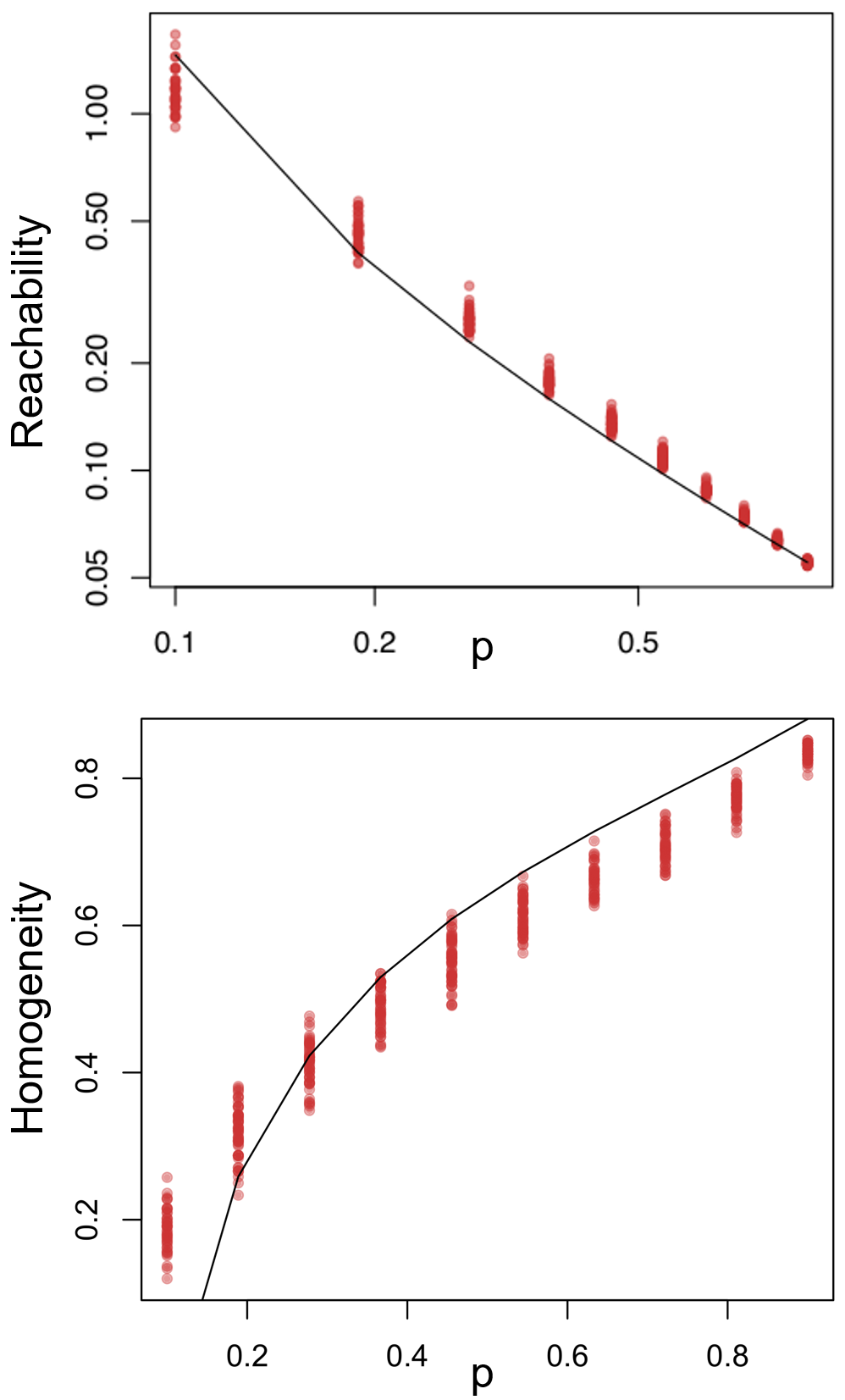}
              \caption{Reachability and homogeneity in the Erdos-Renyi case for different densities $p$. Points correspond to the real data, while the lines are the approximations. 
              }
\label{fig:erdos_homo_reach}
\end{figure}   
    
     As can be interpreted from these approximations, the sparser an Erdos-Renyi network is (i.e. the lower is $p$), the larger is the reachability and the lower is the homogeneity. The approximations and the actual reachability and homogeneity (computed directly from the graph adjacency matrix) are plotted in Fig. \ref{fig:erdos_homo_reach}. 
     
     In addition to providing insights as to why some families of network topologies do better than others (as shown in Fig. \ref{fig:combined2}C), Theorem~\ref{theorem:variance_bounds} also predicts that as Erdos-Renyi networks become sparser (less dense) -- because their reachability increases and their homogeneity decreases -- the diversity of parameter updates during each iteration would increase leading to more effective parameter search, and therefore increased performance. 
    
    By running a last number of experiments where we vary the density of Erdos-Renyi networks (keeping the number of agents at 1000) and use these topologies on the RoboSchool Humanoid-v1 DRL benchmark, we can test if sparser networks actually perform better. As can be seen in Figure \ref{joy_erdos}, when the density of Erdos-Renyi networks decreases, learning performance increases significantly. 
    
\begin{figure}[t]
\centering
  \includegraphics[width=.8\linewidth]{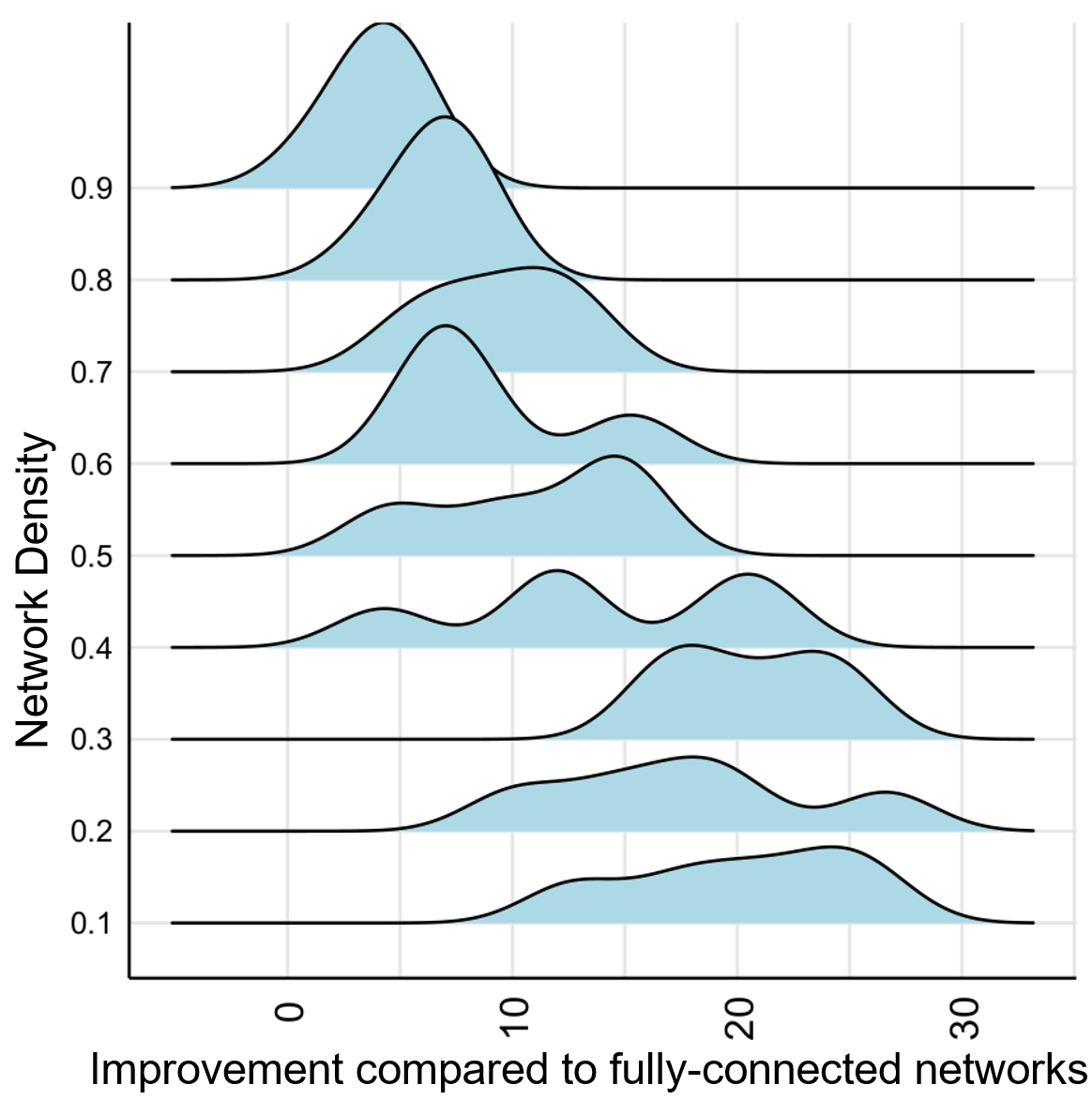}
  \caption{The distributions of reward improvements (compared to the fully-connected topologies) with network density in Erdos-Renyi networks for RoboSchool Humanoid-v1. As predicted by Theorem~\ref{theorem:variance_bounds}, as density decreases, performance increases. Note that a density of 1.0 would result in a fully-connected network. }
  \label{joy_erdos}
\end{figure}%

\section{Conclusion}
    In our work, we extended ES, a DRL algorithm, to use alternate network topologies and empirically showed that the de facto fully-connected topology performs worse in our experiments. We also performed an ablation study by running controls on all the modifications we made to the ES algorithm, and we showed that the improvements we observed are not explained away by modifications other than the use of alternate topologies. Finally, we provided theoretical insights into why alternate topologies may be superior, and observed that our theoretical predictions are in line with our empirical results. Future work could explore the use of dynamical topologies where agent connections are continuously rewired to adapt to the local terrain of the research landscape. 
    
    \section{Acknowledgements}
    The authors wish to thank Yan Leng for her help in early analysis of the properties of networks, Alia Braley for proofreading, and Tim Salimans for his help with replicating the OpenAI results as a benchmark. 
    
\bibliographystyle{ACM-Reference-Format}  
\bibliography{main}


\begin{thebibliography}{00}


\ifx \showCODEN    \undefined \def \showCODEN     #1{\unskip}     \fi
\ifx \showDOI      \undefined \def \showDOI       #1{#1}\fi
\ifx \showISBNx    \undefined \def \showISBNx     #1{\unskip}     \fi
\ifx \showISBNxiii \undefined \def \showISBNxiii  #1{\unskip}     \fi
\ifx \showISSN     \undefined \def \showISSN      #1{\unskip}     \fi
\ifx \showLCCN     \undefined \def \showLCCN      #1{\unskip}     \fi
\ifx \shownote     \undefined \def \shownote      #1{#1}          \fi
\ifx \showarticletitle \undefined \def \showarticletitle #1{#1}   \fi
\ifx \showURL      \undefined \def \showURL       {\relax}        \fi
\providecommand\bibfield[2]{#2}
\providecommand\bibinfo[2]{#2}
\providecommand\natexlab[1]{#1}
\providecommand\showeprint[2][]{arXiv:#2}

\bibitem[\protect\citeauthoryear{Acemoglu, Dahleh, Lobel, and
  Ozdaglar}{Acemoglu et~al\mbox{.}}{2011}]%
        {acemoglu2011bayesian}
\bibfield{author}{\bibinfo{person}{Daron Acemoglu}, \bibinfo{person}{Munther~A
  Dahleh}, \bibinfo{person}{Ilan Lobel}, {and} \bibinfo{person}{Asuman
  Ozdaglar}.} \bibinfo{year}{2011}\natexlab{}.
\newblock \showarticletitle{Bayesian learning in social networks}.
\newblock \bibinfo{journal}{{\em The Review of Economic Studies\/}}
  \bibinfo{volume}{78}, \bibinfo{number}{4} (\bibinfo{year}{2011}),
  \bibinfo{pages}{1201--1236}.
\newblock


\bibitem[\protect\citeauthoryear{Araujo and Lamb}{Araujo and Lamb}{2008}]%
        {araujo2008memetic}
\bibfield{author}{\bibinfo{person}{Ricardo~M Araujo} {and}
  \bibinfo{person}{Luis~C Lamb}.} \bibinfo{year}{2008}\natexlab{}.
\newblock \showarticletitle{Memetic Networks: Analyzing the Effects of Network
  Properties in Multi-Agent Performance.}. In \bibinfo{booktitle}{{\em AAAI}},
  Vol.~\bibinfo{volume}{8}. \bibinfo{pages}{3--8}.
\newblock


\bibitem[\protect\citeauthoryear{Auger and Hansen}{Auger and Hansen}{2005}]%
        {auger2005restart}
\bibfield{author}{\bibinfo{person}{Anne Auger} {and} \bibinfo{person}{Nikolaus
  Hansen}.} \bibinfo{year}{2005}\natexlab{}.
\newblock \showarticletitle{A restart CMA evolution strategy with increasing
  population size}. In \bibinfo{booktitle}{{\em Evolutionary Computation, 2005.
  The 2005 IEEE Congress on}}, Vol.~\bibinfo{volume}{2}. IEEE,
  \bibinfo{pages}{1769--1776}.
\newblock


\bibitem[\protect\citeauthoryear{Barab{\'a}si and Albert}{Barab{\'a}si and
  Albert}{1999}]%
        {barabasi1999emergence}
\bibfield{author}{\bibinfo{person}{Albert-L{\'a}szl{\'o} Barab{\'a}si} {and}
  \bibinfo{person}{R{\'e}ka Albert}.} \bibinfo{year}{1999}\natexlab{}.
\newblock \showarticletitle{Emergence of scaling in random networks}.
\newblock \bibinfo{journal}{{\em science\/}} \bibinfo{volume}{286},
  \bibinfo{number}{5439} (\bibinfo{year}{1999}), \bibinfo{pages}{509--512}.
\newblock


\bibitem[\protect\citeauthoryear{Barkoczi and Galesic}{Barkoczi and
  Galesic}{2016}]%
        {Barkoczi2016}
\bibfield{author}{\bibinfo{person}{Daniel Barkoczi} {and}
  \bibinfo{person}{Mirta Galesic}.} \bibinfo{year}{2016}\natexlab{}.
\newblock \showarticletitle{Social learning strategies modify the effect of
  network structure on group performance}.
\newblock \bibinfo{journal}{{\em Nature communications\/}}  \bibinfo{volume}{7}
  (\bibinfo{year}{2016}).
\newblock


\bibitem[\protect\citeauthoryear{Bellemare, Naddaf, Veness, and
  Bowling}{Bellemare et~al\mbox{.}}{2013}]%
        {bellemare2013arcade}
\bibfield{author}{\bibinfo{person}{Marc~G Bellemare}, \bibinfo{person}{Yavar
  Naddaf}, \bibinfo{person}{Joel Veness}, {and} \bibinfo{person}{Michael
  Bowling}.} \bibinfo{year}{2013}\natexlab{}.
\newblock \showarticletitle{The arcade learning environment: An evaluation
  platform for general agents}.
\newblock \bibinfo{journal}{{\em Journal of Artificial Intelligence
  Research\/}}  \bibinfo{volume}{47} (\bibinfo{year}{2013}),
  \bibinfo{pages}{253--279}.
\newblock


\bibitem[\protect\citeauthoryear{Choroma{\'n}ski, Matuszak, and
  Miekisz}{Choroma{\'n}ski et~al\mbox{.}}{2013}]%
        {choromanski2013scale}
\bibfield{author}{\bibinfo{person}{Krzysztof Choroma{\'n}ski},
  \bibinfo{person}{Micha{\l} Matuszak}, {and} \bibinfo{person}{Jacek Miekisz}.}
  \bibinfo{year}{2013}\natexlab{}.
\newblock \showarticletitle{Scale-free graph with preferential attachment and
  evolving internal vertex structure}.
\newblock \bibinfo{journal}{{\em Journal of Statistical Physics\/}}
  \bibinfo{volume}{151}, \bibinfo{number}{6} (\bibinfo{year}{2013}),
  \bibinfo{pages}{1175--1183}.
\newblock


\bibitem[\protect\citeauthoryear{ERDdS and R\&WI}{ERDdS and R\&WI}{1959}]%
        {erdds1959random}
\bibfield{author}{\bibinfo{person}{P ERDdS} {and} \bibinfo{person}{A R\&WI}.}
  \bibinfo{year}{1959}\natexlab{}.
\newblock \showarticletitle{On random graphs I}.
\newblock \bibinfo{journal}{{\em Publ. Math. Debrecen\/}}  \bibinfo{volume}{6}
  (\bibinfo{year}{1959}), \bibinfo{pages}{290--297}.
\newblock


\bibitem[\protect\citeauthoryear{Erd{\"o}s, R{\'e}nyi, et~al\mbox{.}}{Erd{\"o}s
  et~al\mbox{.}}{1959}]%
        {erdos1959random}
\bibfield{author}{\bibinfo{person}{Paul Erd{\"o}s}, \bibinfo{person}{Alfr{\'e}d
  R{\'e}nyi}, {et~al\mbox{.}}} \bibinfo{year}{1959}\natexlab{}.
\newblock \showarticletitle{On random graphs}.
\newblock \bibinfo{journal}{{\em Publicationes mathematicae\/}}
  \bibinfo{volume}{6}, \bibinfo{number}{26} (\bibinfo{year}{1959}),
  \bibinfo{pages}{290--297}.
\newblock


\bibitem[\protect\citeauthoryear{Espeholt, Soyer, Munos, Simonyan, Mnih, Ward,
  Doron, Firoiu, Harley, Dunning, et~al\mbox{.}}{Espeholt
  et~al\mbox{.}}{2018}]%
        {espeholt2018impala}
\bibfield{author}{\bibinfo{person}{Lasse Espeholt}, \bibinfo{person}{Hubert
  Soyer}, \bibinfo{person}{Remi Munos}, \bibinfo{person}{Karen Simonyan},
  \bibinfo{person}{Volodymir Mnih}, \bibinfo{person}{Tom Ward},
  \bibinfo{person}{Yotam Doron}, \bibinfo{person}{Vlad Firoiu},
  \bibinfo{person}{Tim Harley}, \bibinfo{person}{Iain Dunning},
  {et~al\mbox{.}}} \bibinfo{year}{2018}\natexlab{}.
\newblock \showarticletitle{IMPALA: Scalable distributed Deep-RL with
  importance weighted actor-learner architectures}.
\newblock \bibinfo{journal}{{\em arXiv preprint arXiv:1802.01561\/}}
  (\bibinfo{year}{2018}).
\newblock


\bibitem[\protect\citeauthoryear{Ferber and Weiss}{Ferber and Weiss}{1999}]%
        {ferber1999multi}
\bibfield{author}{\bibinfo{person}{Jacques Ferber} {and}
  \bibinfo{person}{Gerhard Weiss}.} \bibinfo{year}{1999}\natexlab{}.
\newblock \bibinfo{booktitle}{{\em Multi-agent systems: an introduction to
  distributed artificial intelligence}}. Vol.~\bibinfo{volume}{1}.
\newblock \bibinfo{publisher}{Addison-Wesley Reading}.
\newblock


\bibitem[\protect\citeauthoryear{Geweke}{Geweke}{1988}]%
        {geweke1988antithetic}
\bibfield{author}{\bibinfo{person}{John Geweke}.}
  \bibinfo{year}{1988}\natexlab{}.
\newblock \showarticletitle{Antithetic acceleration of Monte Carlo integration
  in Bayesian inference}.
\newblock \bibinfo{journal}{{\em Journal of Econometrics\/}}
  \bibinfo{volume}{38}, \bibinfo{number}{1-2} (\bibinfo{year}{1988}),
  \bibinfo{pages}{73--89}.
\newblock


\bibitem[\protect\citeauthoryear{Jaderberg, Dalibard, Osindero, Czarnecki,
  Donahue, Razavi, Vinyals, Green, Dunning, Simonyan, et~al\mbox{.}}{Jaderberg
  et~al\mbox{.}}{2017}]%
        {jaderberg2017population}
\bibfield{author}{\bibinfo{person}{Max Jaderberg}, \bibinfo{person}{Valentin
  Dalibard}, \bibinfo{person}{Simon Osindero}, \bibinfo{person}{Wojciech~M
  Czarnecki}, \bibinfo{person}{Jeff Donahue}, \bibinfo{person}{Ali Razavi},
  \bibinfo{person}{Oriol Vinyals}, \bibinfo{person}{Tim Green},
  \bibinfo{person}{Iain Dunning}, \bibinfo{person}{Karen Simonyan},
  {et~al\mbox{.}}} \bibinfo{year}{2017}\natexlab{}.
\newblock \showarticletitle{Population Based Training of Neural Networks}.
\newblock \bibinfo{journal}{{\em arXiv preprint arXiv:1711.09846\/}}
  (\bibinfo{year}{2017}).
\newblock


\bibitem[\protect\citeauthoryear{Jing, Lim, and Ong}{Jing
  et~al\mbox{.}}{2004}]%
        {jing2004island}
\bibfield{author}{\bibinfo{person}{Tang Jing}, \bibinfo{person}{Meng~Hiot Lim},
  {and} \bibinfo{person}{Yew~Soon Ong}.} \bibinfo{year}{2004}\natexlab{}.
\newblock \showarticletitle{Island model parallel hybrid-ga for large scale
  combinatorial optimization}. In \bibinfo{booktitle}{{\em Proceedings of the
  8th International Conference on Control, Automation, Robotics and Vision,
  Special Session on Computational Intelligence on the Grid}}.
\newblock


\bibitem[\protect\citeauthoryear{Lazer and Friedman}{Lazer and
  Friedman}{2007}]%
        {Lazer2007}
\bibfield{author}{\bibinfo{person}{David Lazer} {and} \bibinfo{person}{Allan
  Friedman}.} \bibinfo{year}{2007}\natexlab{}.
\newblock \showarticletitle{The network structure of exploration and
  exploitation}.
\newblock \bibinfo{journal}{{\em Administrative Science Quarterly\/}}
  \bibinfo{volume}{52}, \bibinfo{number}{4} (\bibinfo{year}{2007}),
  \bibinfo{pages}{667--694}.
\newblock


\bibitem[\protect\citeauthoryear{Macua, Tukiainen, Hern{\'a}ndez, Baldazo,
  de~Cote, and Zazo}{Macua et~al\mbox{.}}{2017}]%
        {macua2017diff}
\bibfield{author}{\bibinfo{person}{Sergio~Valcarcel Macua},
  \bibinfo{person}{Aleksi Tukiainen}, \bibinfo{person}{Daniel
  Garc{\'\i}a-Oca{\~n}a Hern{\'a}ndez}, \bibinfo{person}{David Baldazo},
  \bibinfo{person}{Enrique~Munoz de Cote}, {and} \bibinfo{person}{Santiago
  Zazo}.} \bibinfo{year}{2017}\natexlab{}.
\newblock \showarticletitle{Diff-DAC: Distributed Actor-Critic for Multitask
  Deep Reinforcement Learning}.
\newblock \bibinfo{journal}{{\em arXiv preprint arXiv:1710.10363\/}}
  (\bibinfo{year}{2017}).
\newblock


\bibitem[\protect\citeauthoryear{Mnih, Badia, Mirza, Graves, Lillicrap, Harley,
  Silver, and Kavukcuoglu}{Mnih et~al\mbox{.}}{2016}]%
        {Mnih2016}
\bibfield{author}{\bibinfo{person}{Volodymyr Mnih},
  \bibinfo{person}{Adria~Puigdomenech Badia}, \bibinfo{person}{Mehdi Mirza},
  \bibinfo{person}{Alex Graves}, \bibinfo{person}{Timothy Lillicrap},
  \bibinfo{person}{Tim Harley}, \bibinfo{person}{David Silver}, {and}
  \bibinfo{person}{Koray Kavukcuoglu}.} \bibinfo{year}{2016}\natexlab{}.
\newblock \showarticletitle{Asynchronous methods for deep reinforcement
  learning}.
\newblock  (\bibinfo{year}{2016}), \bibinfo{pages}{1928--1937}.
\newblock


\bibitem[\protect\citeauthoryear{Mnih, Kavukcuoglu, Silver, Graves, Antonoglou,
  Wierstra, and Riedmiller}{Mnih et~al\mbox{.}}{2013}]%
        {mnih2013playing}
\bibfield{author}{\bibinfo{person}{Volodymyr Mnih}, \bibinfo{person}{Koray
  Kavukcuoglu}, \bibinfo{person}{David Silver}, \bibinfo{person}{Alex Graves},
  \bibinfo{person}{Ioannis Antonoglou}, \bibinfo{person}{Daan Wierstra}, {and}
  \bibinfo{person}{Martin Riedmiller}.} \bibinfo{year}{2013}\natexlab{}.
\newblock \showarticletitle{Playing atari with deep reinforcement learning}.
\newblock \bibinfo{journal}{{\em arXiv preprint arXiv:1312.5602\/}}
  (\bibinfo{year}{2013}).
\newblock


\bibitem[\protect\citeauthoryear{Nair, Srinivasan, Blackwell, Alcicek, Fearon,
  De~Maria, Panneershelvam, Suleyman, Beattie, Petersen, et~al\mbox{.}}{Nair
  et~al\mbox{.}}{2015}]%
        {Nair2015}
\bibfield{author}{\bibinfo{person}{Arun Nair}, \bibinfo{person}{Praveen
  Srinivasan}, \bibinfo{person}{Sam Blackwell}, \bibinfo{person}{Cagdas
  Alcicek}, \bibinfo{person}{Rory Fearon}, \bibinfo{person}{Alessandro
  De~Maria}, \bibinfo{person}{Vedavyas Panneershelvam},
  \bibinfo{person}{Mustafa Suleyman}, \bibinfo{person}{Charles Beattie},
  \bibinfo{person}{Stig Petersen}, {et~al\mbox{.}}}
  \bibinfo{year}{2015}\natexlab{}.
\newblock \showarticletitle{Massively parallel methods for deep reinforcement
  learning}.
\newblock \bibinfo{journal}{{\em arXiv preprint arXiv:1507.04296\/}}
  (\bibinfo{year}{2015}).
\newblock


\bibitem[\protect\citeauthoryear{Nedic}{Nedic}{2011}]%
        {nedic2011asynchronous}
\bibfield{author}{\bibinfo{person}{Angelia Nedic}.}
  \bibinfo{year}{2011}\natexlab{}.
\newblock \showarticletitle{Asynchronous broadcast-based convex optimization
  over a network}.
\newblock \bibinfo{journal}{{\it IEEE Trans. Automat. Control}}
  \bibinfo{volume}{56}, \bibinfo{number}{6} (\bibinfo{year}{2011}),
  \bibinfo{pages}{1337--1351}.
\newblock


\bibitem[\protect\citeauthoryear{Nedi{\'c}, Olshevsky, and Rabbat}{Nedi{\'c}
  et~al\mbox{.}}{2017}]%
        {nedic2017network}
\bibfield{author}{\bibinfo{person}{Angelia Nedi{\'c}}, \bibinfo{person}{Alex
  Olshevsky}, {and} \bibinfo{person}{Michael~G Rabbat}.}
  \bibinfo{year}{2017}\natexlab{}.
\newblock \showarticletitle{Network Topology and Communication-Computation
  Tradeoffs in Decentralized Optimization}.
\newblock \bibinfo{journal}{{\em arXiv preprint arXiv:1709.08765\/}}
  (\bibinfo{year}{2017}).
\newblock


\bibitem[\protect\citeauthoryear{Nedic and Ozdaglar}{Nedic and
  Ozdaglar}{2010}]%
        {nedic201010}
\bibfield{author}{\bibinfo{person}{Angelia Nedic} {and} \bibinfo{person}{Asuman
  Ozdaglar}.} \bibinfo{year}{2010}\natexlab{}.
\newblock \showarticletitle{10 cooperative distributed multi-agent}.
\newblock \bibinfo{journal}{{\em Convex Optimization in Signal Processing and
  Communications\/}}  \bibinfo{volume}{340} (\bibinfo{year}{2010}).
\newblock


\bibitem[\protect\citeauthoryear{Newman}{Newman}{2010}]%
        {mej2010networks}
\bibfield{author}{\bibinfo{person}{Mark Newman}.}
  \bibinfo{year}{2010}\natexlab{}.
\newblock \bibinfo{title}{Networks: An Introduction}.
\newblock   (\bibinfo{year}{2010}).
\newblock


\bibitem[\protect\citeauthoryear{OpenAI}{OpenAI}{2017}]%
        {roboschool}
\bibfield{author}{\bibinfo{person}{OpenAI}.} \bibinfo{year}{2017}\natexlab{}.
\newblock \bibinfo{title}{Roboschool}.
\newblock \bibinfo{howpublished}{\url{https://github.com/openai/roboschool}}.
  (\bibinfo{year}{2017}).
\newblock
\newblock
\shownote{Accessed: 2017-09-30.}


\bibitem[\protect\citeauthoryear{OpenAI}{OpenAI}{2018}]%
        {OpenAI_dota}
\bibfield{author}{\bibinfo{person}{OpenAI}.} \bibinfo{year}{2018}\natexlab{}.
\newblock \bibinfo{title}{OpenAI Five}.
\newblock \bibinfo{howpublished}{https://blog.openai.com/openai-five/}.
  (\bibinfo{year}{2018}).
\newblock


\bibitem[\protect\citeauthoryear{Rechenberg}{Rechenberg}{1973}]%
        {rechenberg1973evolution}
\bibfield{author}{\bibinfo{person}{Ingo Rechenberg}.}
  \bibinfo{year}{1973}\natexlab{}.
\newblock \showarticletitle{Evolution Strategy: Optimization of Technical
  systems by means of biological evolution}.
\newblock \bibinfo{journal}{{\em Fromman-Holzboog, Stuttgart\/}}
  \bibinfo{volume}{104} (\bibinfo{year}{1973}), \bibinfo{pages}{15--16}.
\newblock


\bibitem[\protect\citeauthoryear{Salimans, Ho, Chen, and Sutskever}{Salimans
  et~al\mbox{.}}{2017}]%
        {Salimans2017}
\bibfield{author}{\bibinfo{person}{Tim Salimans}, \bibinfo{person}{Jonathan
  Ho}, \bibinfo{person}{Xi Chen}, {and} \bibinfo{person}{Ilya Sutskever}.}
  \bibinfo{year}{2017}\natexlab{}.
\newblock \showarticletitle{Evolution strategies as a scalable alternative to
  reinforcement learning}.
\newblock \bibinfo{journal}{{\em arXiv preprint arXiv:1703.03864\/}}
  (\bibinfo{year}{2017}).
\newblock


\bibitem[\protect\citeauthoryear{Schwefel}{Schwefel}{1977}]%
        {schwefel1977numerische}
\bibfield{author}{\bibinfo{person}{Hans-Paul Schwefel}.}
  \bibinfo{year}{1977}\natexlab{}.
\newblock \bibinfo{booktitle}{{\em Numerische Optimierung von Computer-Modellen
  mittels der Evolutionsstrategie: mit einer vergleichenden Einf{\"u}hrung in
  die Hill-Climbing-und Zufallsstrategie}}.
\newblock \bibinfo{publisher}{Birkh{\"a}user}.
\newblock


\bibitem[\protect\citeauthoryear{Scott, Blocker, Bonassi, Chipman, George, and
  McCulloch}{Scott et~al\mbox{.}}{2016}]%
        {scott2016bayes}
\bibfield{author}{\bibinfo{person}{Steven~L Scott},
  \bibinfo{person}{Alexander~W Blocker}, \bibinfo{person}{Fernando~V Bonassi},
  \bibinfo{person}{Hugh~A Chipman}, \bibinfo{person}{Edward~I George}, {and}
  \bibinfo{person}{Robert~E McCulloch}.} \bibinfo{year}{2016}\natexlab{}.
\newblock \showarticletitle{Bayes and big data: The consensus {Monte Carlo}
  algorithm}.
\newblock \bibinfo{journal}{{\em International Journal of Management Science
  and Engineering Management\/}} \bibinfo{volume}{11}, \bibinfo{number}{2}
  (\bibinfo{year}{2016}), \bibinfo{pages}{78--88}.
\newblock


\bibitem[\protect\citeauthoryear{Todorov, Erez, and Tassa}{Todorov
  et~al\mbox{.}}{2012}]%
        {todorov2012mujoco}
\bibfield{author}{\bibinfo{person}{Emanuel Todorov}, \bibinfo{person}{Tom
  Erez}, {and} \bibinfo{person}{Yuval Tassa}.} \bibinfo{year}{2012}\natexlab{}.
\newblock \showarticletitle{MuJoCo: A physics engine for model-based control}.
  In \bibinfo{booktitle}{{\em Intelligent Robots and Systems (IROS), 2012
  IEEE/RSJ International Conference on}}. IEEE, \bibinfo{pages}{5026--5033}.
\newblock


\bibitem[\protect\citeauthoryear{Travers and Milgram}{Travers and
  Milgram}{1977}]%
        {travers1977experimental}
\bibfield{author}{\bibinfo{person}{Jeffrey Travers} {and}
  \bibinfo{person}{Stanley Milgram}.} \bibinfo{year}{1977}\natexlab{}.
\newblock \showarticletitle{An experimental study of the small world problem}.
\newblock In \bibinfo{booktitle}{{\em Social Networks}}.
  \bibinfo{publisher}{Elsevier}, \bibinfo{pages}{179--197}.
\newblock


\bibitem[\protect\citeauthoryear{Vinyals, Babuschkin, Chung, Mathieu,
  Jaderberg, Czarnecki, Dudzik, Huang, Georgiev, Powell, Ewalds, Horgan,
  Kroiss, Danihelka, Agapiou, Oh, Dalibard, Choi, Sifre, Sulsky, Vezhnevets,
  Molloy, Cai, Budden, Paine, Gulcehre, Wang, Pfaff, Pohlen, Yogatama, Cohen,
  McKinney, Smith, Schaul, Lillicrap, Apps, Kavukcuoglu, Hassabis, and
  Silver}{Vinyals et~al\mbox{.}}{2019}]%
        {alphastarblog}
\bibfield{author}{\bibinfo{person}{Oriol Vinyals}, \bibinfo{person}{Igor
  Babuschkin}, \bibinfo{person}{Junyoung Chung}, \bibinfo{person}{Michael
  Mathieu}, \bibinfo{person}{Max Jaderberg}, \bibinfo{person}{Wojtek
  Czarnecki}, \bibinfo{person}{Andrew Dudzik}, \bibinfo{person}{Aja Huang},
  \bibinfo{person}{Petko Georgiev}, \bibinfo{person}{Richard Powell},
  \bibinfo{person}{Timo Ewalds}, \bibinfo{person}{Dan Horgan},
  \bibinfo{person}{Manuel Kroiss}, \bibinfo{person}{Ivo Danihelka},
  \bibinfo{person}{John Agapiou}, \bibinfo{person}{Junhyuk Oh},
  \bibinfo{person}{Valentin Dalibard}, \bibinfo{person}{David Choi},
  \bibinfo{person}{Laurent Sifre}, \bibinfo{person}{Yury Sulsky},
  \bibinfo{person}{Sasha Vezhnevets}, \bibinfo{person}{James Molloy},
  \bibinfo{person}{Trevor Cai}, \bibinfo{person}{David Budden},
  \bibinfo{person}{Tom Paine}, \bibinfo{person}{Caglar Gulcehre},
  \bibinfo{person}{Ziyu Wang}, \bibinfo{person}{Tobias Pfaff},
  \bibinfo{person}{Toby Pohlen}, \bibinfo{person}{Dani Yogatama},
  \bibinfo{person}{Julia Cohen}, \bibinfo{person}{Katrina McKinney},
  \bibinfo{person}{Oliver Smith}, \bibinfo{person}{Tom Schaul},
  \bibinfo{person}{Timothy Lillicrap}, \bibinfo{person}{Chris Apps},
  \bibinfo{person}{Koray Kavukcuoglu}, \bibinfo{person}{Demis Hassabis}, {and}
  \bibinfo{person}{David Silver}.} \bibinfo{year}{2019}\natexlab{}.
\newblock \bibinfo{title}{AlphaStar: Mastering the Real-Time Strategy Game
  StarCraft II}.
\newblock
  \bibinfo{howpublished}{\url{https://deepmind.com/blog/alphastar-mastering-real-time-strategy-game-starcraft-ii/}}.
    (\bibinfo{year}{2019}).
\newblock


\bibitem[\protect\citeauthoryear{Watts and Strogatz}{Watts and
  Strogatz}{1998}]%
        {watts1998collective}
\bibfield{author}{\bibinfo{person}{Duncan~J Watts} {and}
  \bibinfo{person}{Steven~H Strogatz}.} \bibinfo{year}{1998}\natexlab{}.
\newblock \showarticletitle{Collective dynamics of ‘small-world’networks}.
\newblock \bibinfo{journal}{{\em nature\/}} \bibinfo{volume}{393},
  \bibinfo{number}{6684} (\bibinfo{year}{1998}), \bibinfo{pages}{440}.
\newblock


\bibitem[\protect\citeauthoryear{Wierstra, Schaul, Glasmachers, Sun, Peters,
  and Schmidhuber}{Wierstra et~al\mbox{.}}{2014}]%
        {wierstra2014natural}
\bibfield{author}{\bibinfo{person}{Daan Wierstra}, \bibinfo{person}{Tom
  Schaul}, \bibinfo{person}{Tobias Glasmachers}, \bibinfo{person}{Yi Sun},
  \bibinfo{person}{Jan Peters}, {and} \bibinfo{person}{J{\"u}rgen
  Schmidhuber}.} \bibinfo{year}{2014}\natexlab{}.
\newblock \showarticletitle{Natural evolution strategies.}
\newblock \bibinfo{journal}{{\em Journal of Machine Learning Research\/}}
  \bibinfo{volume}{15}, \bibinfo{number}{1} (\bibinfo{year}{2014}),
  \bibinfo{pages}{949--980}.
\newblock


\bibitem[\protect\citeauthoryear{Williams}{Williams}{1992}]%
        {williams1992simple}
\bibfield{author}{\bibinfo{person}{Ronald~J Williams}.}
  \bibinfo{year}{1992}\natexlab{}.
\newblock \showarticletitle{Simple statistical gradient-following algorithms
  for connectionist reinforcement learning}.
\newblock In \bibinfo{booktitle}{{\em Reinforcement Learning}}.
  \bibinfo{publisher}{Springer}, \bibinfo{pages}{5--32}.
\newblock


\bibitem[\protect\citeauthoryear{Wolpert and Tumer}{Wolpert and Tumer}{1999}]%
        {wolpert1999introduction}
\bibfield{author}{\bibinfo{person}{David~H Wolpert} {and}
  \bibinfo{person}{Kagan Tumer}.} \bibinfo{year}{1999}\natexlab{}.
\newblock \showarticletitle{An introduction to collective intelligence}.
\newblock \bibinfo{journal}{{\em arXiv preprint cs/9908014\/}}
  (\bibinfo{year}{1999}).
\newblock


\bibitem[\protect\citeauthoryear{Woolley, Chabris, Pentland, Hashmi, and
  Malone}{Woolley et~al\mbox{.}}{2010}]%
        {woolley2010evidence}
\bibfield{author}{\bibinfo{person}{Anita~Williams Woolley},
  \bibinfo{person}{Christopher~F Chabris}, \bibinfo{person}{Alex Pentland},
  \bibinfo{person}{Nada Hashmi}, {and} \bibinfo{person}{Thomas~W Malone}.}
  \bibinfo{year}{2010}\natexlab{}.
\newblock \showarticletitle{Evidence for a collective intelligence factor in
  the performance of human groups}.
\newblock \bibinfo{journal}{{\em science\/}} \bibinfo{volume}{330},
  \bibinfo{number}{6004} (\bibinfo{year}{2010}), \bibinfo{pages}{686--688}.
\newblock


\bibitem[\protect\citeauthoryear{Zhang, Yang, Liu, Zhang, and
  Ba{\c{s}}ar}{Zhang et~al\mbox{.}}{2018}]%
        {zhang2018fully}
\bibfield{author}{\bibinfo{person}{Kaiqing Zhang}, \bibinfo{person}{Zhuoran
  Yang}, \bibinfo{person}{Han Liu}, \bibinfo{person}{Tong Zhang}, {and}
  \bibinfo{person}{Tamer Ba{\c{s}}ar}.} \bibinfo{year}{2018}\natexlab{}.
\newblock \showarticletitle{Fully decentralized multi-agent reinforcement
  learning with networked agents}.
\newblock \bibinfo{journal}{{\em arXiv preprint arXiv:1802.08757\/}}
  (\bibinfo{year}{2018}).
\newblock


\end{thebibliography}

\clearpage
\newpage

\section*{Appendix 1 : Diversity of Parameter Updates}
Here we provide proofs Theorem 1 from the main paper concerning the diversity of the parameter updates.

\begin{theorem}
In a multi-agent evolution strategies update iteration $t$ for a system with $N$ agents
with parameters $\Theta = \{\theta_1^{(t)}, ..., \theta_N^{(t)}\}$, agent communication matrix
${\matrixsym A} = \{a_{ij}\}$, agent-wise perturbations $\mathcal E=\{\epsilon_1^{(t)}, ..., \epsilon_N^{(t)}\}$, and parameter update $u^{(t)}_i$ given by the sparsely-connected update rule:
\begin{equation*}
	u^{(t)}_i = \frac{\alpha}{N \sigma^2} \sum_{j=1}^N a_{ij}\cdot\big(R(\matrixsym \theta^{(t)}_j +
	 \sigma \matrixsym \epsilon_j^{(t)}) \cdot ((\matrixsym \theta^{(t)}_j + \sigma \matrixsym \epsilon_j^{(t)}) - (\matrixsym \theta^{(t)}_i))\big) \label{eqn:update_rule}
\end{equation*}
The following relation holds:
\begin{multline}
{\sf Var}_i[u^{(t)}_i] \leq \frac{\max^2 R(\cdot)}{N\sigma^4} \Big\{\Big(\frac{\lVert A^2 \rVert_F}{(\min_l |\matrixsym A_l|)^2} \Big) \cdot
f(\Theta, \mathcal E) \\ - \Big(\frac{\min_{l} |\matrixsym A_l|}{\max_l |\matrixsym A_l|} \Big)^2 \cdot g(\mathcal E)\Big\}
\end{multline}
Here, $ |\matrixsym A_l| = \sum_j \ a_{jl}$, $f(\Theta, \mathcal E) = \Big(\sum_{j,k,m}^{N, N, N} \big((\matrixsym \theta^{(t)}_j +
\sigma \matrixsym \epsilon_j^{(t)} - \matrixsym \theta^{(t)}_m) \cdot  (\matrixsym \theta^{(t)}_k + \sigma \matrixsym \epsilon_k^{(t)} -
\matrixsym \theta^{(t)}_m)\big)^2\Big)^{\frac{1}{2}}$, and $g(\mathcal E) =
\frac{\sigma^2}{N}\Big(\sum_{i, j}^{N, N}\matrixsym \epsilon_i^{(t)}\matrixsym\epsilon_j^{(t)}\Big)$.
\label{theorem:variance_bounds}
\end{theorem}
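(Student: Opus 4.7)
The plan is to expand the agent-wise variance as ${\sf Var}_i[u_i^{(t)}] = \frac{1}{N}\sum_i \|u_i^{(t)}\|^2 - \|\bar u^{(t)}\|^2$, where $\bar u^{(t)}=\frac{1}{N}\sum_i u_i^{(t)}$, and then bound the two pieces separately using Cauchy--Schwarz together with elementary properties of the adjacency matrix. Throughout I set $v_j:=\matrixsym\theta_j^{(t)}+\sigma\matrixsym\epsilon_j^{(t)}$ and $c:=\alpha/(N\sigma^2)$, so that $u_i^{(t)}=c\sum_j a_{ij}R(v_j)(v_j-\matrixsym\theta_i^{(t)})$.

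For the first (mean-of-squared-norms) term, I would expand $\|u_i^{(t)}\|^2 = c^2\sum_{j,k}a_{ij}a_{ik}R(v_j)R(v_k)\langle v_j-\matrixsym\theta_i^{(t)},\, v_k-\matrixsym\theta_i^{(t)}\rangle$, pull out $R(v_j)R(v_k)\le\max^2 R(\cdot)$, and sum over $i$. The crucial move is a two-stage Cauchy--Schwarz sandwich. First, use $|X_{ijk}|\le\sqrt{\sum_m X_{mjk}^2}$ with $X_{ijk}:=\langle v_j-\matrixsym\theta_i^{(t)},\,v_k-\matrixsym\theta_i^{(t)}\rangle$, which eliminates the $i$-dependence of the inner-product factor. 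Next, interchange summations and use the identity $\sum_i a_{ij}a_{ik}=(A^2)_{jk}$ valid for symmetric binary $A$. Finally apply Cauchy--Schwarz on the pair $(j,k)$ to peel off $\|A^2\|_F=\sqrt{\sum_{j,k}(A^2)_{jk}^2}$, leaving exactly $\sqrt{\sum_{j,k,m}X_{mjk}^2}=f(\Theta,\mathcal E)$. The $(\min_l|\matrixsym A_l|)^{-2}$ denominator in the statement is obtained through a bookkeeping normalization of $(A^2)_{jk}$ against $\min_l|\matrixsym A_l|$ during the interchange, which also converts the $c^2$ prefactor into the overall $\max^2 R(\cdot)/(N\sigma^4)$ appearing out front.

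For the subtracted $\|\bar u^{(t)}\|^2$ term I need a \emph{lower} bound. Writing $\bar u^{(t)}=\frac{c}{N}\sum_j R(v_j)\bigl(|\matrixsym A_j|v_j-(A\matrixsym\theta^{(t)})_j\bigr)$, I would factor out $\min_l|\matrixsym A_l|$ from the $|\matrixsym A_j|$'s and dominate the residual by $\max_l|\matrixsym A_l|$ to produce the homogeneity ratio $(\min_l|\matrixsym A_l|/\max_l|\matrixsym A_l|)^2$. Under the antithetic sampling used in NetES (every $\matrixsym\epsilon_i^{(t)}$ paired with $-\matrixsym\epsilon_i^{(t)}$), the $\matrixsym\theta$--$\matrixsym\epsilon$ cross terms cancel in pairs, so the surviving quadratic form in the perturbations collapses to $(\sigma^2/N)\sum_{i,j}\matrixsym\epsilon_i^{(t)}\matrixsym\epsilon_j^{(t)}=g(\mathcal E)$. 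Combining the upper bound on the first term with the lower bound on the second then yields the stated inequality.

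The hardest step is handling the coupling inside $\langle v_j-\matrixsym\theta_i^{(t)},v_k-\matrixsym\theta_i^{(t)}\rangle$: the agent index $i$ over which the outer sum runs also appears inside the parameter-dependent factor, which blocks any immediate separation between the graph $A$ and the configuration $(\Theta,\mathcal E)$. The two-stage Cauchy--Schwarz resolves this --- the first stage removes the $i$-dependence of the inner products, the second isolates $\|A^2\|_F$ --- but the fiddly part will be lining up the scalar prefactors so that the correct $(\min_l|\matrixsym A_l|)^{-2}$ normalization falls out, and verifying that antithetic cancellation produces exactly $g(\mathcal E)$ with no leftover remainder terms.
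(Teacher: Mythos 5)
Your overall architecture is the same as the paper's: write ${\sf Var}_i[u^{(t)}_i]$ as second moment minus squared mean, upper-bound the second moment by pulling out $\max^2 R(\cdot)$ and using Cauchy--Schwarz to separate the adjacency structure from the configuration term $f(\Theta,\mathcal E)$, and lower-bound the squared mean via the degree extremes to produce the homogeneity ratio and $g(\mathcal E)$. Your two-stage Cauchy--Schwarz on the first term (first eliminating the $i$-dependence of the inner products via $|X_{ijk}|\leq\bigl(\sum_m X_{mjk}^2\bigr)^{1/2}$, then peeling off $\bigl(\sum_{j,k}(\matrixsym A^2)_{jk}^2\bigr)^{1/2}$ over the pair $(j,k)$) is a legitimate variant: the paper instead applies Cauchy--Schwarz once over the triple index $(i,j,k)$ and, using $a_{ij}\in\{0,1\}$ and symmetry, obtains $\bigl(\sum_{j,k}(\matrixsym A^2)_{jk}\bigr)^{1/2}$, which it then writes as $\lVert A^2\rVert_F$. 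Your route actually lands on the literal Frobenius norm appearing in the statement, so on that piece you are, if anything, cleaner. The degree normalization producing $(\min_l|\matrixsym A_l|)^{-2}$ is handled in the paper exactly as loosely as in your sketch.

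The genuine flaw is in the subtracted term. The $\matrixsym\theta$-dependence of $\bar u^{(t)}$ does not cancel because of antithetic sampling; it cancels identically because $\matrixsym A$ is symmetric: $\sum_{i,j}a_{ij}(\matrixsym\theta^{(t)}_j-\matrixsym\theta^{(t)}_i)=0$ by relabeling $i\leftrightarrow j$, equivalently $\sum_j|\matrixsym A_j|\matrixsym\theta^{(t)}_j=\sum_j(\matrixsym A\matrixsym\theta^{(t)})_j$. That is the mechanism the paper uses, and it is the one you need: antithetic sampling is never invoked in the proof, and if you did take the mirrored population seriously you would have $\sum_j\matrixsym\epsilon^{(t)}_j=0$, so $g(\mathcal E)=\frac{\sigma^2}{N}\sum_{i,j}\matrixsym\epsilon^{(t)}_i\matrixsym\epsilon^{(t)}_j$ would vanish identically and the very term you are deriving would be trivial. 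As written, this step fails; replace the antithetic argument with the symmetry relabeling and the rest goes through as you describe. Two smaller omissions: to put the common prefactor $\max^2R(\cdot)$ on the subtracted term you must first lower-bound $R$ by $\min R(\cdot)$ and then use the fitness-shaping normalization $\min R(\cdot)=-\max R(\cdot)$, which your sketch never mentions; and the extraction of $(\min_l|\matrixsym A_l|)^2$ from $\bigl(\sum_j|\matrixsym A_j|\matrixsym\epsilon^{(t)}_j\bigr)^2$ deserves more care than ``factoring out,'' since the $\matrixsym\epsilon^{(t)}_j$ are signed (the paper performs the same step without further justification).
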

\begin{proof}
From Equation~\ref{eqn:update_rule}, the update rule is given by:
\begin{equation}
	u^{(t)}_i = \frac{\alpha}{N \sigma^2} \sum_{j=1}^N a_{ij}\cdot\big(R(\matrixsym \theta^{(t)}_j +
	 \sigma \matrixsym \epsilon_j^{(t)}) \cdot ((\matrixsym \theta^{(t)}_j + \sigma \matrixsym \epsilon_j^{(t)}) - (\matrixsym \theta^{(t)}_i))\big)
\end{equation}
The variance of $u^{(t)}_i$ can be written as:
\begin{align}
{\sf Var}_i[u^{(t)}_i]  = \mathbb E_{i \in \mathcal A}[(u^{(t)}_i)^2] - (\mathbb E_{i \in \mathcal A}[(u^{(t)}_i)])^2
\end{align}
Expanding $\mathbb E_{i \in \mathcal A}[(u^{(t)}_i)^2]$:
\begin{equation}
= \frac{1}{N}\sum_{i \in \mathcal A}\big\{\frac{\gamma}{N\sigma^2} \sum_{j=1} \matrixsym a_{ij} \cdot R(\matrixsym \theta^{(t)}_j + \sigma \matrixsym \epsilon_j^{(t)}) \cdot (\matrixsym \theta^{(t)}_j + \sigma \matrixsym \epsilon_j^{(t)} - \matrixsym \theta^{(t)}_i)\big\}^2
\end{equation}
Simplifying:
\begin{multline}
= \frac{1}{N\sigma^4}\sum_{i,j,k} \Big( \frac{\matrixsym a_{ij} \matrixsym a_{ik}}{|\matrixsym A_i|^2} R(\matrixsym \theta^{(t)}_j + \sigma \matrixsym \epsilon_j^{(t)}) R(\matrixsym \theta^{(t)}_k + \sigma \matrixsym \epsilon_k^{(t)}) \\ \cdot (\matrixsym \theta^{(t)}_j + \sigma \matrixsym \epsilon_j^{(t)} - \matrixsym \theta^{(t)}_i) \cdot (\matrixsym \theta^{(t)}_k + \sigma \matrixsym \epsilon_k^{(t)} - \matrixsym \theta^{(t)}_i) \Big)
\end{multline}
Since $R(\cdot) \leq \max R(\cdot)$, therefore:
\begin{equation}
\leq \frac{\max^2 R(\cdot)}{N\sigma^4}\sum_{i,j,k} \frac{\matrixsym a_{ij} \matrixsym a_{ik}}{|\matrixsym A_i|^2} \cdot (\matrixsym \theta^{(t)}_j + \sigma \matrixsym \epsilon_j^{(t)} - \matrixsym \theta^{(t)}_i) \cdot  (\matrixsym \theta^{(t)}_k + \sigma \matrixsym \epsilon_k^{(t)} - \matrixsym \theta^{(t)}_i)
\end{equation}
\begin{equation}
\leq \frac{\max^2 R(\cdot)}{N\sigma^4}\sum_{i,j,k} \frac{\matrixsym a_{ij} \matrixsym a_{ik}}{\min_l |\matrixsym A_l|^2} \cdot (\matrixsym \theta^{(t)}_j + \sigma \matrixsym \epsilon_j^{(t)} - \matrixsym \theta^{(t)}_i) \cdot  (\matrixsym \theta^{(t)}_k + \sigma \matrixsym \epsilon_k^{(t)} - \matrixsym \theta^{(t)}_i)
\end{equation}
By the Cauchy-Schwarz Inequality:
\begin{multline}
\mathbb E_{i \in \mathcal A}[(u^{(t)}_i)^2]  \leq \frac{\max^2 R(\cdot)}{N\sigma^4} \Big(\sum_{i,j,k} \frac{(\matrixsym a_{ij} \matrixsym a_{ik})^2}{\min_l |\matrixsym A_l|^4} \Big)^{\frac{1}{2}} \\ \cdot \Big(\sum_{i,j,k} \big((\matrixsym \theta^{(t)}_j + \sigma \matrixsym \epsilon_j^{(t)} - \matrixsym \theta^{(t)}_i) \cdot  (\matrixsym \theta^{(t)}_k + \sigma \matrixsym \epsilon_k^{(t)} - \matrixsym \theta^{(t)}_i)\big)^2\Big)^{\frac{1}{2}}
\end{multline}
Since $\matrixsym a_{ij} \in \{0,1\} \forall \ (i,j), (\matrixsym a_{ij} \matrixsym a_{ik})^2 = \matrixsym a_{ij} \matrixsym a_{ik}\ \forall (i,j,k)$. Additionally, we know that $\matrixsym a_{ij} = \matrixsym a_{ji}$, since $\matrixsym A$ is symmetric. Therefore, $\sum_i \matrixsym a_{ij} \matrixsym a_{ik} = \sum_i \matrixsym a_{ji} \matrixsym a_{ik} = \matrixsym A^2_{jk}$. Using this:
\begin{multline}
\mathbb E_{i \in \mathcal A}[(u^{(t)}_i)^2]  \leq \frac{\max^2 R(\cdot)}{N\sigma^4} \cdot \Big(\frac{|\matrixsym A^2|^\frac{1}{2}}{\min_l |\matrixsym A_l|^2} \Big) \\ \cdot \Big(\sum_{i,j,k} \big((\matrixsym \theta^{(t)}_j + \sigma \matrixsym \epsilon_j^{(t)} - \matrixsym \theta^{(t)}_i) \cdot  (\matrixsym \theta^{(t)}_k + \sigma \matrixsym \epsilon_k^{(t)} - \matrixsym \theta^{(t)}_i)\big)^2\Big)^{\frac{1}{2}}
\end{multline}
Replacing $\Big(\sum_{i,j,k} \big((\matrixsym \theta^{(t)}_j + \sigma \matrixsym \epsilon_j^{(t)} - \matrixsym \theta^{(t)}_i) \cdot  (\matrixsym \theta^{(t)}_k + \sigma \matrixsym \epsilon_k^{(t)} - \matrixsym \theta^{(t)}_i)\big)^2\Big)^{\frac{1}{2}} = f(\Theta, \mathcal E)$, where $\Theta = \{\matrixsym \theta^{(t)}_i\}_{i=1}^N, \mathcal E = \{\matrixsym \epsilon_i\}_{i=1}^N$ for compactness, we obtain:
\begin{equation}
\small
\mathbb E_{i \in \mathcal A}[(u^{(t)}_i)^2]  \leq \frac{\max^2 R (\cdot)}{N\sigma^4} \cdot \Big(\frac{|\matrixsym A^2|^\frac{1}{2}}{\min_l |\matrixsym A_l|^2} \Big) \cdot  f(\Theta, \mathcal E)
\end{equation}
Similarly, the squared expectation of $(u^{(t)}_i)$ over all agents can be given by:
\begin{multline}
(\mathbb E_{i \in \mathcal A}[u^{(t)}_i])^2 = \Big(\frac{1}{N}\sum_{i \in \mathcal A}\big\{\frac{\gamma}{N\sigma^2} \sum_{j=1} \matrixsym a_{ij} \cdot R(\matrixsym \theta^{(t)}_j + \sigma \matrixsym \epsilon_j^{(t)})\\ \cdot (\matrixsym \theta^{(t)}_j + \sigma \matrixsym \epsilon_j^{(t)} - \matrixsym \theta^{(t)}_i)\big\}\Big)^2
\end{multline}
\begin{multline}
= \frac{1}{N^2\sigma^4}\Big(\sum_{i \in \mathcal A}\big\{\frac{1}{|\matrixsym A_i|} \sum_{j=1} \matrixsym a_{ij} \cdot R(\matrixsym \theta^{(t)}_j + \sigma \matrixsym \epsilon_j^{(t)})  \cdot (\matrixsym \theta^{(t)}_j + \sigma \matrixsym \epsilon_j^{(t)} - \matrixsym \theta^{(t)}_i)\big\}\Big)^2
\end{multline}
\begin{multline}
= \frac{1}{N^2\sigma^4}\Big(\sum_{i, j}\big\{\frac{\matrixsym a_{ij}}{|\matrixsym A_i|} \cdot R(\matrixsym \theta^{(t)}_j + \sigma \matrixsym \epsilon_j^{(t)}) \cdot (\matrixsym \theta^{(t)}_j + \sigma \matrixsym \epsilon_j^{(t)} - \matrixsym \theta^{(t)}_i)\big\}\Big)^2 \\
\end{multline}
Since $R(\cdot) \geq \min R(\cdot)$, therefore:
\begin{align}
\geq \frac{\min^2 R (\cdot)}{N^2\sigma^4} \Big(\sum_{i, j}\big\{\frac{\matrixsym a_{ij}}{|\matrixsym A_i|} \cdot (\matrixsym \theta^{(t)}_j + \sigma \matrixsym \epsilon_j^{(t)} - \matrixsym \theta^{(t)}_i)\big\}\Big)^2
\end{align}
\begin{align}
\geq \frac{\min^2 R (\cdot)}{N^2\sigma^4 \max_l |\matrixsym A_l|^2} \Big(\sum_{i, j}\big\{\matrixsym a_{ij}\cdot (\matrixsym \theta^{(t)}_j + \sigma \matrixsym \epsilon_j^{(t)} - \matrixsym \theta^{(t)}_i)\big\}\Big)^2
\end{align}
Since $\matrixsym A$ is symmetric, $\sum_{i,j}^{N,N} \matrixsym a_{ij} \cdot (\matrixsym \theta^{(t)}_j + \sigma \matrixsym \epsilon_j - \matrixsym \theta^{(t)}_i) = \sum_{i,j}^{N,N} \matrixsym a_{ij} \cdot (\matrixsym \theta^{(t)}_i + \sigma \matrixsym \epsilon_i - \matrixsym \theta^{(t)}_j)$. Therefore:
\begin{multline}
=\frac{\min^2 R (\cdot)}{N^2\sigma^4 \max_l |\matrixsym A_l|^2} \Big(\sum_{i, j}\frac{1}{2}\big\{\matrixsym a_{ij}\cdot (\matrixsym \theta^{(t)}_j + \sigma \matrixsym \epsilon_j^{(t)} - \matrixsym \theta^{(t)}_i) \\ + \matrixsym a_{ij}\cdot (\matrixsym \theta^{(t)}_i + \sigma \matrixsym \epsilon_i^{(t)} - \matrixsym \theta^{(t)}_j)\big\}\Big)^2
\end{multline}
Therefore,
\begin{multline}
(\mathbb E_{i \in \mathcal A}[u^{(t)}_i])^2 = \\
\frac{\min^2 R (\cdot)}{N^2\sigma^2 \max_l |\matrixsym A_l|^2} \Big(\sum_{i, j}\frac{1}{2}\big\{\matrixsym a_{ij}\cdot
(\matrixsym \epsilon_j^{(t)} + \matrixsym \epsilon_i^{(t)})\big\}\Big)^2
\end{multline}
Using the symmetry of $\matrixsym A$, we have that $\sum_{i,j}^{N,N} \matrixsym a_{ij} \matrixsym \epsilon_i = \sum_{i,j}^{N,N} \matrixsym a_{ij} \matrixsym \epsilon_j$. Therefore:
\begin{align}
&= \frac{\min^2 R (\cdot)}{N^2\sigma^2 \max_l |\matrixsym A_l|^2} \Big(\sum_{i, j}\matrixsym a_{ij}\cdot \matrixsym \epsilon_j^{(t)}\Big)^2 \\
& = \frac{\min^2 R (\cdot)}{N^2\sigma^2 \max_l |\matrixsym A_l|^2} \Big(\sum_{j}|\matrixsym A_{j}|\cdot \matrixsym \epsilon_j^{(t)}\Big)^2 \\
&\geq \frac{\min^2 R(\cdot) \min_{l} |\matrixsym A_l|^2}{N^2\sigma^2 \max_l |\matrixsym A_l|^2} \Big(\sum_{i, j}\matrixsym \epsilon_i^{(t)}\matrixsym\epsilon_j^{(t)}\Big)
\end{align}
Combining both terms of the variance expression, and using the normalization of the iteration rewards that ensures $\min R(\cdot) = -\max R(\cdot)$, we can obtain (using $g(\mathcal E) = \frac{\sigma^2}{N}\Big(\sum_{i, j}\matrixsym \epsilon_i^{(t)}\matrixsym\epsilon_j^{(t)}\Big)$):
\begin{multline}
{\sf Var}_{i \in \mathcal A}[u^{(t)}_i] \leq \frac{\max^2 R(\cdot)}{N\sigma^4} \Big\{\Big(\frac{|\matrixsym A^2|^\frac{1}{2}}{\min_l |\matrixsym A_l|^2} \Big) \cdot  f(\Theta, \mathcal E) \\ - \Big(\frac{\min_{l} |\matrixsym A_l|^2}{\max_l |\matrixsym A_l|^2} \Big) \cdot g(\mathcal E)\Big\}
\end{multline}
\end{proof}

\section*{Appendix 2 : Approximating Reachability and Homogeneity for Large Erdos-Renyi Graphs}
Recall that a Erdos-Renyi graph is constructed in the following way
\begin{enumerate}
\item Take $n$ nodes
\item For each pair of nodes, link them with probability $p$
\end{enumerate}
\begin{figure*}[th!]
\begin{center}
\includegraphics[width=.6\textwidth]{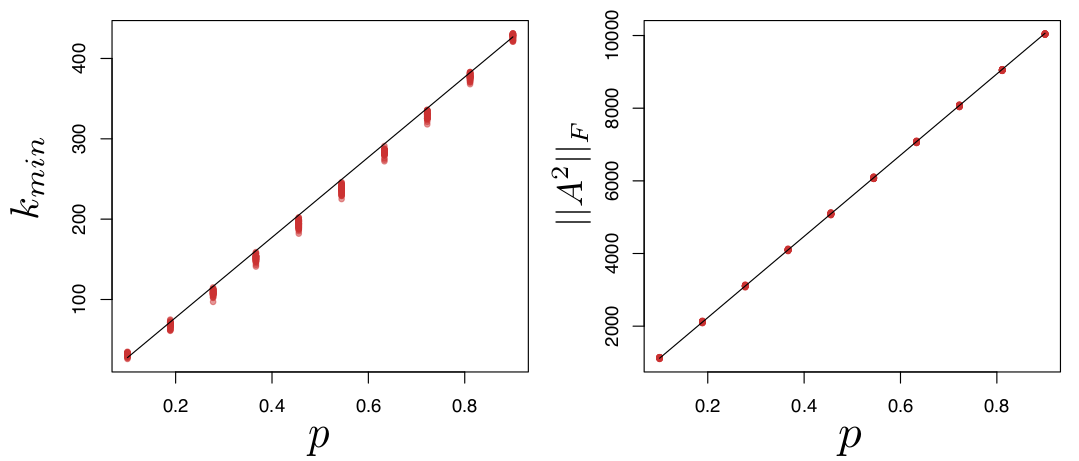}
\caption{Comparison between the values of $k_{min}, ||A^2||_F,$ and Reachability as a function of $p$ for different realizations of the Erdos-Renyi model (points) and their approximations given in Equations (\ref{approxkmin}), (\ref{approxa2}) and (\ref{approxbound1}) respectively (lines).}
\label{default}
\end{center}
\end{figure*}
The model is simple, and we can infer the following:
\begin{itemize}
\item The average degree of a node is $p(n-1)$
\item The distribution of degree for the nodes is the Binomial distribution of $n-1$ events with probability $p$, $B(n-1,p)$.
\item The (average) number of paths of length 2 from one node $i$ to a node $j\neq i$ ($n^{(2)}_{ij}$) can be calculated this way: a path of length two between $i$ and $j$ involves a third node $k$. Since there are $n-2$ of them, the maximun number of paths between $i$ and $j$ is $n-2$. However, for that path to exists there has to be a link between $i$ and $k$ and $k$ and $j$, an event with probability $p^2$. Thus, the average number of paths between $i$ and $j$ is $p^2(n-2)$
\end{itemize}
\subsection*{Estimating Reachability}
We can then estimate Reachability:
$$
Reachability = \frac{||A^2||_F}{(min_l |A_l|)^2} = \frac{\sqrt{\sum_{i,j} n^{(2)}_{ij}}}{k_{min}^2}
$$
where $k_{min} = (min_l |A_l|)$ is the minimum degree in the network. Given the above calculations we can approximate
$$
\sum_{i,j} n^{(2)}_{ij} = \sum_{i} n^{(2)}_{ii} + \sum_{i\neq j} n^{(2)}_{ij} \approx n \times [p(n-1)] + n(n-1) \times [p^2(n-2)]
$$
where the first term is the number of paths of length 2 from $i$ to $i$ summed over all nodes, i.e. the sum of the degrees in the network. The second term is the sum of $p^2(n-2)$ for the terms in which $i\neq j$. For large $n$ we have that
$$
\sum_{i,j} n^{(2)}_{ij} \approx p^2 n^3
$$
and thus,
\begin{equation}\label{approxa2}
||A^2||_F \approx \sqrt{p^2n^3}.
\end{equation}

For the denominator $k_{min}$ we could use the distribution of the minimum of the binomial distribution $B(n-1,p)$. However, since it is a complicated calculation we can approximate this way: since the binomial distribution $B(n-1,p)$ looks like a Gaussian, we can say that the minimum of the distribution is closed to the mean minus two times the standard deviation:

\begin{equation}\label{approxkmin}
k_{min} \approx p(n-1) - 2 \sqrt{p(n-1)(1-p)}
\end{equation}

Once again in the case of large $n$ we have

$$k_{min} \approx pn$$

Thus

\begin{equation}\label{approxbound1}
Reachability \approx \frac{\sqrt{p^2 n^3}}{[p(n-1) - 2 \sqrt{p(n-1)(1-p)}]^2}
\end{equation}

Assuming that $n$ is large, we can approximate

$$
Reachability \approx \frac{p n^{3/2}}{p^2n^2} = \frac{1}{pn^{1/2}}
$$

Thus the bound decreases with increasing $n$ and $p$. Note that the density of the Erdos-Renyi graph (the number of links over the number of possible links) is $p$. And thus for a fixed $n$ more sparse networks $p \simeq 0$ have larger Reachability than more connected networks $p\simeq 1$.
\subsection*{Estimating Homogeneity}

The Homogeneity is defined as

$$
Homogeneity = \left(\frac{k_{min}}{k_{max}}\right)^2
$$

As before we can approximate

$$
k_{max} \approx p(n-1) + 2 \sqrt{p(n-1)(1-p)}
$$

And thus

$$
Homogeneity \approx \left(\frac{p(n-1) - 2 \sqrt{p(n-1)(1-p)}}{p(n-1) + 2 \sqrt{p(n-1)(1-p)}} \right)^2
$$

For large $p$ we can approximate it to be

\begin{equation}\label{approxHomogeneity}
Homogeneity \approx 1 - 8 \frac{\sqrt{1-p}}{\sqrt{np}}
\end{equation}

which shows that for $p \simeq 1$ we have that Homogeneity grows as a function of $p$. Thus for fixed number of nodes $n$, increasing $p$ we get larger values of the Homogeneity.

\end{document}